\algrenewcommand\algorithmicrequire{\textbf{Input:}}
\algrenewcommand\algorithmicensure{\textbf{Output:}}
\newtheorem{theorem}{Theorem}
\newtheorem{assumption}{Assumption}
\newtheorem{proposition}{Proposition}
\newtheorem{lemma}{Lemma}
\newtheorem{definition}{Definition}
\newtheorem{property}{Property}
\title{Covariance Scattering Transforms}
\author{
    Andrea Cavallo\textsuperscript{\rm 1},
    Ayushman Raghuvanshi\textsuperscript{\rm 2},
    Sundeep Prabhakar Chepuri\textsuperscript{\rm 2},
    Elvin Isufi\textsuperscript{\rm 1}
}
\begin{document}

\maketitle

\begin{abstract}
Machine learning and data processing techniques relying on covariance information are widespread as they identify meaningful patterns in unsupervised and unlabeled settings.
As a prominent example, Principal Component Analysis (PCA) projects data points onto the eigenvectors of their covariance matrix, capturing the directions of maximum variance. This mapping, however, falls short in two directions: it fails to capture information in low-variance directions, relevant when, e.g., the data contains high-variance noise; and it provides unstable results in low-sample regimes, especially when covariance eigenvalues are close. 
CoVariance Neural Networks (VNNs), i.e., graph neural networks using the covariance matrix as a graph, show improved stability to estimation errors and learn more expressive functions in the covariance spectrum than PCA, but require training and operate in a labeled setup. 
To get the benefits of both worlds, we propose Covariance Scattering Transforms (CSTs), deep untrained networks that sequentially apply filters localized in the covariance spectrum to the input data and produce expressive hierarchical representations via nonlinearities. We define the filters as covariance wavelets that capture specific and detailed covariance spectral patterns. We improve CSTs' computational and memory efficiency via a pruning mechanism, and we prove that their error due to finite-sample covariance estimations is less sensitive to close covariance eigenvalues compared to PCA, improving their stability. 
Our experiments on age prediction from cortical thickness measurements on 4 datasets collecting patients with neurodegenerative diseases show that CSTs produce stable representations in low-data settings, as VNNs but without any training, and lead to comparable or better predictions w.r.t. more complex learning models. 
\end{abstract}

\begin{links}
    \link{Code}{https://github.com/andrea-cavallo-98/CST}
    \link{Extended version}{https://arxiv.org/abs/2511.08878}
\end{links}

\section{Introduction}

Covariance information captures relevant data characteristics and is widely used to gain insights about data interdependencies, find latent relations and increase data processing performance in unsupervised settings. 
For example, correlations in brain activity recordings and cortical thickness measures are of high importance to identify interactions among brain regions and co-activation patterns, which lead to deeper understanding of neural dynamics and better neurodegenerative disease prediction~\cite{yin2023anatomically,bessadok2022graph,sihag2024explainable,bashyam2020mri}. 
Often, the covariance information is accessed via the spectrum of the covariance matrix, which characterizes the concentration of variance along different directions. Principal Component Analysis (PCA), for example, projects the data on the covariance eigenvectors to maximize variance and potentially reduce dimensionality by selecting the directions corresponding to the largest eigenvalues~\cite{Jolliffe2002pca} -- this can be seen as an ideal high-pass filtering in the covariance spectrum, cf. Figure~\ref{fig:motivation_example}. 
However, data might exhibit complex patterns in the covariance spectrum that lead to relevant information localized in low-variance directions, which PCA filtration loses. 
Furthermore, PCA is heavily affected by estimation errors in low-sample regimes, causing its output to significantly diverge when the covariance estimation is not reliable~\cite{Jolliffe2002pca,Jolliffe2016PrincipalCA}. 
To mitigate these problems, the work in~\cite{sihag2022covariance} considers each feature of a data sample as a node of a graph and the covariances among two features as weighted edges, and introduces coVariance Neural Networks (VNNs), graph neural networks operating on this graph. VNNs learn polynomial filtering functions in the covariance eigenvalues, which gives them the flexibility to identify complex variance patterns, and are stable to finite-sample estimation errors. Such properties made VNNs effective in a variety of settings~\cite{sihag2024explainable,sihag2023transferablility,cavallo2024stvnn,cavallo2025fair}. 
However, VNNs rely on labeled data for training, and their stability and expressivity depend on their learning dynamics, which are difficult to control.  

To merge the benefits of both approaches -- unsupervised and untrained nature of PCA as well as the stability and expressivity of VNNs -- we propose Covariance Scattering Transforms (CSTs), deep architectures that process covariance information in a fully untrained manner with stability guarantees under finite-sample covariance estimation errors.
The building block of CSTs are covariance wavelets, localized functions that capture specific patterns in the covariance eigenvalues, as shown in Figure~\ref{fig:motivation_example}, for which we propose three distinct implementations. 
CSTs sequentially apply banks of wavelet filters followed by nonlinearities and, optionally, low-pass aggregations to decrease representation size.
To reduce the computation and coefficients of CSTs and improve stability, we apply a pruning mechanism that removes wavelet coefficients carrying low energy, deemed irrelevant for efficient representations.
Our contributions are summarized as follows.
\begin{itemize}
    \item We coin the concept of CSTs, untrained deep networks for expressive covariance-based data representation.
    \item We study the stability of CSTs to finite-sample covariance estimates and input noise. Our results show that CSTs are less affected by close covariance eigenvalues than PCA and their error decreases as more samples $T$ are observed with rate $\mathcal{O}(T^{-1/2})$, extending the advantages of VNNs to the untrained setting. 
    \item We show that CSTs produce stable representations on 4 datasets of cortical thickness measurements which can be used by downstream linear regressors to predict patients' age with a performance that matches or beats more complex non-linear methods like VNNs with few labels.
\end{itemize}

\begin{figure}[t]
    \centering
    \includegraphics[width=1\linewidth]{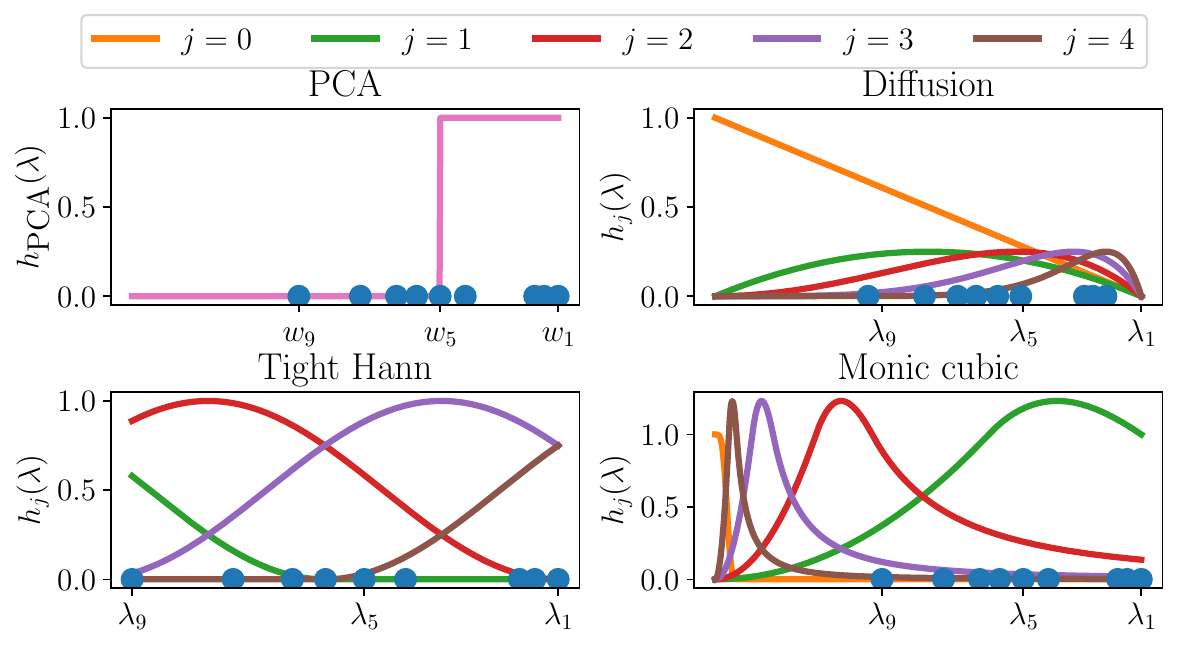}
    \caption{Filters on the covariance eigenvalues $w$ and their scaled versions $\lambda$ (see Method section for details). PCA acts as a high-pass filter selecting only the top $k$ eigenvalues (here $k=5$), whereas covariance wavelets provide more complex and localized filter shapes.}
    \label{fig:motivation_example}
\end{figure}

\section{Background}

Consider a data matrix $\mtX \in \mathbb{R}^{N\times T}$ containing $T$ observations of a random variable $\vcx \in \mathbb{R}^N$. 
The variable $\vcx$ has covariance $\mtC = \mathbb{E}[(\vcx-\mathbb{E}[\vcx])(\vcx-\mathbb{E}[\vcx])^\Tr]$, which is estimated from the $T$ samples as $\mthC = \sum_{t=1}^T (\vcx_t -\vchmu)(\vcx_t -\vchmu)^\Tr / T$ where $\vchmu = \sum_{t=1}^T \vcx_t / T$ is the sample mean. The covariance matrix admits the eigendecomposition $\mtC = \mtV\mtW\mtV^\Tr$ where $\mtV$ contains the orthogonal eigenvectors in its columns and $\mtW = \operatorname{diag}(w_1, \dots, w_N)$ contains the ordered eigenvalues $w_1 \geq w_2 \geq \dots \geq w_N$. We denote the corresponding sample estimates as $\mthC = \mthV \mthW\mthV^\Tr$.

\noindent \textbf{PCA transform.}
The PCA transform projects the data onto the eigenvectors of the covariance matrix, i.e., $\vctx = \mthV^\Tr\vcx$, where each eigenvector captures a portion of variance of the data measured by its corresponding eigenvalue.
Often, a subset with the largest $k$ eigenvectors is used to represent the data, which leads to PCA for dimensionality reduction:  $\vctx_{(k)} = [\mthV]_{1,\dots,k}^\Tr\vcx$, where $[\cdot]_{1,\dots,k}$ selects the first $k$ columns.
This can be written as $\vctx_{(k)} = [ \operatorname{diag}(h_\textnormal{PCA}(\schw_1), \dots, h_\textnormal{PCA}(\schw_N))\mthV^\Tr\vcx]_{1,\dots,k}$, where $h_\textnormal{PCA}(w) = \mathbf{1}[ w \geq \schw_k]$ is a high-pass filter in the covariance eigenvalues (see Figure~\ref{fig:motivation_example}).
While this allows for dimensionality reduction that retains high-variance information, it fails to represent information that is localized in low-variance directions, which might still be relevant for the task at hand.
Moreover, the PCA transform and its successive filtrations are unstable to covariance estimation errors. Specifically, the projection error of a data sample $\vcx$ on the true and perturbed covariance eigenvectors $\mtV, \mthV$ is bounded with high probability as~\cite[Proposition 1]{cavallo2024stvnn}:
\begin{equation}\label{eq:pca_bound}
    \| [\mthV]_{1,\dots,k}^\Tr\vcx - [\mtV]_{1,\dots,k}^\Tr\vcx \| \leq \mathcal{O}((\min_{\substack{i,j = 1, \dots, k,\\i \neq j}}|w_i - w_j|)^{-1})
\end{equation}
where $\|\cdot\|$ denotes the 2-norm for vectors and spectral norm for matrices throughout the paper.
That is, if the covariance matrix has close distinct eigenvalues $w_i \approx w_j, i\neq j$, then the principal component estimation becomes difficult and requires large amounts of observations to be reliable.

\noindent \textbf{Covariance filters.}
We provide a different interpretation for the PCA transform by building a weighted graph with $N$ nodes where the weight of the edge $(i,j)$ is the covariance value $[\mthC]_{ij}$ and the $i$-th node has as signal the $i$-th entry of the vector $\vcx$ (cf. Figure~\ref{fig:cov_graph} in the Appendix).
The graph Fourier transform of $\vcx$ is defined as its projection on the graph eigenvectors, i.e., $\vctx = \mthV^\Tr\vcx$, which coincides with the space of the PCA transform.
To increase expressivity, covariance filters~\cite{sihag2022covariance} define a general function $h(w)$ computed on each distinct covariance eigenvalue to modulate the corresponding eigenvector. This function is instantiated via a polynomial $h(w) = \sum_{k=0}^K h_k w^k$, where the coefficients $h_k$ are learned to optimize a task-specific loss.
The processing of a signal $\vcx$ via the polynomial covariance filter can be performed directly in the covariance space as $\mtH(\mthC)\vcx = \mthV \operatorname{diag}(h(\schw_1), \dots, h(\schw_N))\mthV^\Tr = \sum_{k=0}^K h_k \mthC^k\vcx$. Moreover, covariance filters can be assembled into sequential filterbanks interleaved with nonlinearities to define coVariance Neural Networks (VNNs), which learn hierarchical representations $\vcz_\ell = \sigma (\mtH_\ell(\mthC)\vcz_{\ell-1})$ with $ \vcz_0 = \vcx$, $\ell = 1, \ldots, L$.
The last layer output $\vcz_L$ represents the VNN output and is often fed to a task-specific readout function.
VNNs can learn a large class of functions in the covariance eigenspace, extending the PCA transform, and achieve better stability in low-sample regimes~\cite{sihag2022covariance}. However, they require training and labeled data to optimize the parameters $h_k$.

\noindent \textbf{Problem statement.}
Aiming to retain the expressivity and stability of VNNs as well as the untrained/unsupervised nature of the PCA transform, we aim to develop a neural architecture with the following desiderata: D1) use the sample covariance matrix as inductive bias; D2) be expressive and flexible to handle information across all the covariance spectrum; D3) be stable to finite-sample estimation errors in the covariances and input signal; D4) be untrained.

\begin{figure}[t]
    \centering
    \includegraphics[width=1\linewidth]{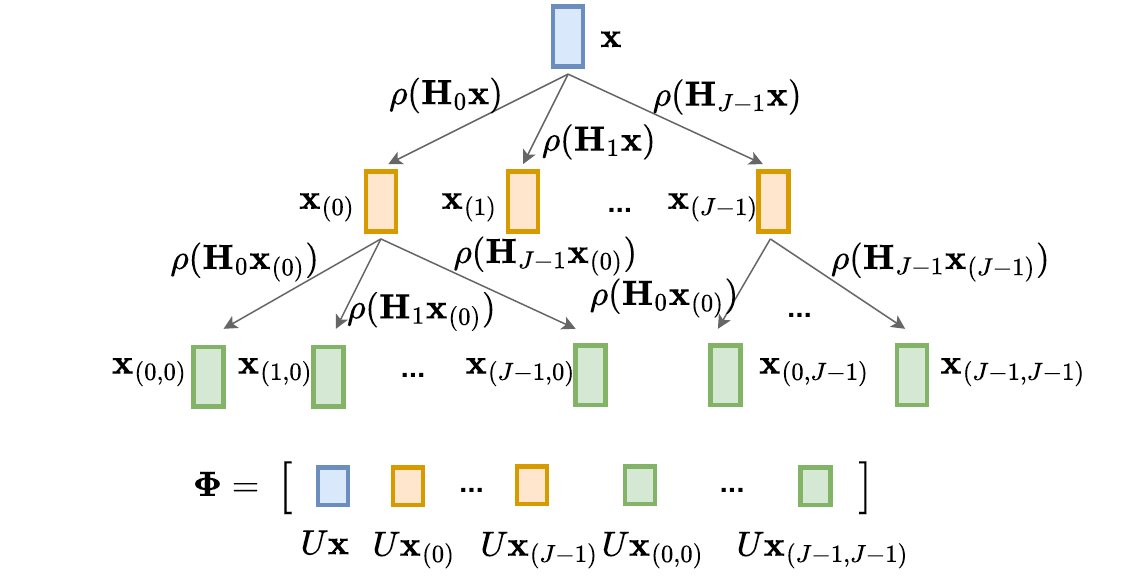}
    \caption{CSTs features are obtained via sequential application of wavelets at different scales, interleaved with non-linear activations $\rho$ and aggregation operators $U$ to produce the final coefficients collected in $\mtPhi$.}
    \label{fig:cst_diagram}
\end{figure}

\section{Method}
\label{sec:method}

We define the Covariance Scattering Transform (CST), a hierarchical untrained network that manipulates the covariance spectrum and meets our desiderata. 
The building block of CSTs are covariance wavelets, spectrally localized filters that are applied sequentially with nonlinearities in between. 

\subsection{Covariance Wavelets}

Covariance wavelets are functions that span the spectrum of the covariance matrix of the data.
Given a covariance matrix $\mtC = \mtV\mtW\mtV^\Tr$, covariance wavelets operate on a wavelet operator matrix $\mtT = \mtV f(\mtW) \mtV^\Tr = \mtV \mtLambda \mtV^\Tr$ where $f(\cdot)$ preserves the positive semi-definiteness of $\mtT$.
In particular, we propose two implementations for $\mtT$:  the covariance matrix with normalized eigenspectrum, i.e., $\mtT = \mtC_N =  \gamma  \mtC / w_1$, where $w_1$ is the largest covariance eigenvalue and parameter $\gamma  \in \mathbb{R}_+$ controls the domain of the spectrum of $\mtT$ (i.e., the interval $[0,\gamma ]$); and $\mtT = \mtC_I =  \gamma (\mtI-\mtC / w_1)$ (where $\mtI$ is the identity matrix), which reverses the order of the covariance eigenvalues in the interval $[0,\gamma ]$, bringing benefits for wavelets that are more discriminative at lower frequencies. We denote with $\mthT = \mthV \mathbf{\hat{\mtLambda}} \mthV^\Tr$ the sample estimate of $\mtT$ from $\mthC$ and corresponding eigendecomposition. 

Covariance wavelets are characterized by a function $h_j(\lambda)$ that acts as a band-pass filter (i.e., $h_j(0)=0$ and $\lim_{\lambda \rightarrow \infty} h_j(\lambda) = 0$, cf. the wavelet admissibility criteria in~\cite{mallat1999wavelet}) and is instantiated at different \textit{scales} $j \geq 1$, i.e., different localizations in the covariance eigenvalues. 
To also account for content at $\lambda=0$, we define an additional function $h_0(\lambda)$ such that $h_0(0) > 0$. The collection of $J$ wavelet functions $\{ h_j(\lambda) \}_{j=0, \dots, J-1}$ is a multiscale filterbank that spans the covariance spectrum.
For a covariance operator $\mtT$, 
the application of the wavelet function $h_j(\lambda)$ on all its eigenvalues $\lambda_1, \dots, \lambda_N$ produces 
the wavelet matrix $\mtH_j(\mtT) = \mtV \operatorname{diag}(h_j(\lambda_1), \dots, h_j(\lambda_N))\mtV^\Tr$.
The \textit{wavelet coefficients} (or \textit{wavelet features}) of a signal $\vcx$ are its projection on the corresponding wavelet matrix, i.e., $\vcx_{(j)} = \mtH_j(\mtT) \vcx$.
We propose three implementations of covariance wavelets: diffusion, Hann and monic. From Figure~\ref{fig:motivation_example}, diffusion wavelets are more localized at high frequencies as the scales increase; Hann wavelets are localized on the specific eigenvalues; monic wavelets are sharper at low frequencies and capture higher frequencies as the scale increases. We define diffusion wavelets next, while we defer Hann and monic wavelets to the Appendix.

\noindent \textbf{Covariance diffusion wavelets.}
In analogy with graph diffusion wavelets~\cite{gama2019diffusion}, we design the covariance diffusion wavelet function as:
\begin{equation}
    h_j(\lambda) = \lambda^{2^{j-1}} - \lambda^{2^{j}}, \quad j \geq 1
\end{equation}
and $h_0(\lambda) = 1-\lambda$. Given a covariance operator $\mtT$, the graph diffusion wavelet can be computed without taking the eigendecomposition as $\mtH_j(\mtT) = \mtT^{2^{j-1}}-\mtT^{2^{j}}$. This reduces the computational complexity, as eigendecomposition is of order $\mathcal{O}(N^3)$ while the recursive computation of $\mtT^{k}\vcx = \mtT(\mtT^{k-1}\vcx)$ is of order $\mathcal{O}(kN^2)$.
To maximize the expressiveness of diffusion wavelets, given a wavelet filterbank with $J$ scales (i.e., $j=0, \dots, J-1$), we rescale the covariance eigenvalues by setting $\gamma  = \left( 1/2 \right)^{1/2^{J-2}}$, such that the $J$-th wavelet reaches its maximum on the largest covariance eigenvalue.
Since this rescaling ensures that $\lambda_1 \leq 1$, $h_j(\lambda)$ behaves as a bandpass filter as $h_j(0) = h_j(1) = 0$.

\noindent \textbf{Properties of covariance wavelets.} 
The covariance wavelets enjoy the following properties.

\begin{property}[Frame]\label{lemma:frame}
The covariance wavelets conform a frame, i.e., $A^2 \|\vcx\| \leq \sum_{j=1}^J \|\mtH_j\vcx\|^2 \leq B^2 \|\vcx\|^2 $ with $0 < A \leq B < \infty$. 
\end{property}

\begin{property}[Lipschitz]\label{lemma:lipschitz}
The covariance wavelets $h_j(\lambda)$ are Lipschitz, i.e., 
$|h_j(\lambda_k)-h_j(\lambda_l)|/|\lambda_k-\lambda_l|\leq P$ for two covariance eigenvalues $\lambda_k,\lambda_l$ and a constant $P>0$.
\end{property}

\begin{property}[Localization]\label{lemma:localization}
    Covariance wavelets are localized both in frequency and covariance space.
\end{property}

Property~\ref{lemma:frame} characterizes the spread of energy of a wavelet filterbank through the constants $A,B$.  
Property~\ref{lemma:lipschitz} describes the variability of the wavelet $h_j$ by limiting its derivative through the constant $P$.
Property~\ref{lemma:localization} extends the joint spectral and spatial localization of existing wavelets to the covariance case. Spectral localization corresponds to concentration of $h_j(\lambda)$ around a specific eigenvalue. Covariance space localization, instead, corresponds to the fact that wavelets centered on a feature assume smaller values on features that are distant from the center one, where the distance depends on the strength of the covariance among the features (see Appendix for a formal definition and analysis).
For diffusion wavelets, the frame bounds are $B=1$ and $A=1-\gamma $, i.e., using more scales $J$ leads to larger $\gamma $ and more spread-out eigenvalues, but may cause larger energy loss as $A$ becomes smaller.
The Lipschitz constant of a diffusion wavelet at scale $j$ for $\lambda \in [0,1]$ is $P=2^{j-1}$, i.e., a larger scale leads to sharper variations in the wavelet functions and consequently a larger Lipschitz constant.

\subsection{Covariance Scattering Transforms}

Covariance Scattering Transforms (CSTs) are deep architectures defined by $L$ layers and a bank of multiresolution covariance wavelets $\{ \mtH_j \}_{j=0}^{J-1}$. Given an input $\vcx$, the CST produces a set of scattering features and concatenates them.
At layer $\ell=0$, the scattering features are the input features $\vcx$.
At layer $\ell=1$, the input $\vcx$ is projected on all $J$ wavelets and processed by nonlinearity $\rho$, i.e., $\vcx_{(j_1)} = \rho(\mtH_{j_1}\vcx)$ for $j_1=0,\dots,J-1$.
This process is repeated recursively, i.e., $\vcx_{(j_\ell,j_{\ell-1}\dots,j_1)} = \rho(\mtH_{j_\ell}\vcx_{(j_{\ell-1}\dots,j_1)})$ for $j_\ell=0,\dots,J-1$ at every layer $\ell=1,\dots,L-1$, creating a hierarchical structure of scattering representations (cf. Figure~\ref{fig:cst_diagram} and Algorithm 1 in the Appendix).
By expanding the recursion, the scattering features at the $\ell$-th layer are
\begin{equation}
    \vcx_{(j_\ell,j_{\ell-1},\dots ,j_1)} = \rho (\mtH_{j_\ell} \rho (\mtH_{j_{\ell-1}}\dots \rho (\mtH_{j_1}\vcx)) ).
\end{equation}
The scattering features $\vcx_{(j_\ell,j_{\ell-1},\dots ,j_1)}$ can be further processed by an operator $U$ (e.g., mean for dimensionality reduction, identity for no processing), and the resulting $(J^L-1)/(J-1)$ coefficients $\phi_{j_\ell,j_{\ell-1}\dots j_1}(\vcx) = U\vcx_{(j_\ell,j_{\ell-1},\dots ,j_1)}$ computed at each layer $\ell = 0, \dots, L-1$ and for all scales $j_\ell = 0, \dots, J-1$ are concatenated to define the CST $\mtPhi(\mtT,\vcx)$. 

\subsection{Pruning}

CSTs' number of coefficients grows exponentially with the increasing scales and number of layers, leading to large-dimensional representations to capture deep encodings. To counteract this, we consider a pruning strategy to sparsify the CST tree and only explore the branches that have larger potential to provide meaningful representations. 
Following~\cite{ioannidis2020pruned}, given a the representation at the $\ell$-th scattering transform layer $
\vcx_{(j_\ell,\dots,j_1)}$, its projection on the $i$-th wavelet $
\vcx_{(i,j_\ell,\dots,j_1)}$ (and the subsequent projections at deeper layers) is discarded if its normalized energy is lower than a predefined threshold $\tau$, i.e., $\|\vcx_{(i,j_\ell,\dots,j_1)}\|/\|\vcx_{(j_\ell,\dots,j_1)}\| \leq \tau$. We denote with $F_\ell$ the number of scattering features selected at layer $\ell$ out of the $J^\ell$ available.
This pruning strategy reduces the search space, leading to more parameter-efficient representations and making the CST a feasible technique for dimensionality reduction. Moreover, the amount of pruning affects the stability of CSTs as we shall elaborate in Theorem~\ref{th:cst_stab}.

\section{Theoretical Analysis}

We characterize theoretically the CST by studying its permutation equivariance and its stability to perturbations in the covariance matrix. We discuss the stability to signal perturbations in the Appendix.

\subsection{Permutation Equivariance}

Since covariance information captures pairwise relations among data features that do not depend on the ordering in which features are observed, it is of interest that the CST is \textit{permutation equivariant}, i.e., if the order of its input is permuted, its output is permuted likewise. Furthermore, if the CST's representations are used to produce a unique label or regression target that does not depend on feature ordering, it is desirable that the CST output is \textit{permutation invariant}, i.e., its output does not change regardless of the input order. 
We first define the permutation operation for a CST.
\begin{definition}\label{def:cst_perm}
    Consider a signal $\vcx \in \mathbb{R}^N$ and a CST $\mtPhi = [ U\vcx || U\vcx_{(1)} || \dots || U\vcx_{(J,\dots,J)}]$, where $||$ is the concatenation operation and $U$ preserves the dimension $N$. Let $\mtPi$ be a permutation matrix. The CST permutation operator is 
    \begin{equation}
        \operatorname{Perm}(\mtPhi, \mtPi) = [ \mtPi U\vcx || \mtPi U\vcx_{(1)} || \dots || \mtPi U\vcx_{(J,\dots,J)}].
    \end{equation}
\end{definition}

The following theorem shows that the CST is permutation equivariant, and can be made permutation invariant by an appropriate choice of aggregation function $U$.

\begin{theorem}\label{th:perm_equivariance}
    Consider a CST $\mtPhi$ computed from a dataset $\mtX \in \mathbb{R}^{N\times T}$, and a CST $\mathbf{\hat{\mtPhi}}$ computed from a dataset $\mthX = \mtPi\mtX$ given permutation matrix $\mtPi \in \mathbb{R}^{N\times N}$.
    If $U$ is permutation equivariant (e.g., identity), then $\hat{\mtPhi} = \operatorname{Perm}(\mathbf{\mtPhi}, \mtPi)$. If $U$ is permutation invariant (e.g., average), then $\mtPhi = \mathbf{\hat{\mtPhi}}$.
\end{theorem}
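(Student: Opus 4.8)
The plan is to track how each object in the CST pipeline transforms when the data is permuted and then propagate this through the layers by induction. The cornerstone is that permuting the data by $\mtPi$ conjugates the sample covariance: since $\mthX=\mtPi\mtX$ replaces every sample $\vcx_t$ by $\mtPi\vcx_t$ and the sample mean $\vchmu$ by $\mtPi\vchmu$, the estimate computed from $\mthX$ is $\sum_t\mtPi(\vcx_t-\vchmu)(\vcx_t-\vchmu)^\Tr\mtPi^\Tr/T=\mtPi\mthC\mtPi^\Tr$. Because $\mtPi$ is orthogonal ($\mtPi\mtPi^\Tr=\mtI$), this conjugation leaves the covariance spectrum unchanged; in particular the largest eigenvalue $w_1$ and the rescaling constant $\gamma$ are permutation invariant, so for either choice $\mtT=\gamma\mthC/w_1$ or $\mtT=\gamma(\mtI-\mthC/w_1)$ the wavelet operator built from $\mthX$ is exactly $\mtPi\mtT\mtPi^\Tr$.

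First I would push this conjugation through the wavelet matrices. Since each $\mtH_j(\mtT)$ is a spectral function of $\mtT$ — for diffusion wavelets the polynomial $\mtT^{2^{j-1}}-\mtT^{2^j}$ — and since orthogonality gives $(\mtPi\mtT\mtPi^\Tr)^k=\mtPi\mtT^k\mtPi^\Tr$, every wavelet matrix obtained from $\mthX$ equals $\mtPi\mtH_j(\mtT)\mtPi^\Tr$. Second, I would record that the nonlinearity $\rho$ acts entrywise and therefore commutes with every permutation matrix.

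With these facts the core is an induction on the layer $\ell$. The base case $\ell=0$ is immediate, as the depth-zero feature is the input, which becomes $\mtPi\vcx$. For the step, assuming the permuted depth-$(\ell-1)$ feature equals $\mtPi\vcx_{(j_{\ell-1},\dots,j_1)}$, the permuted depth-$\ell$ feature is $\rho\big(\mtPi\mtH_{j_\ell}(\mtT)\mtPi^\Tr\cdot\mtPi\vcx_{(j_{\ell-1},\dots,j_1)}\big)$; the inner $\mtPi^\Tr\mtPi$ collapses to $\mtI$ and pulling $\mtPi$ through $\rho$ yields $\mtPi\vcx_{(j_\ell,\dots,j_1)}$, closing the induction. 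It then remains to apply the aggregation $U$ to each feature: if $U$ is permutation equivariant ($U\mtPi=\mtPi U$) every coefficient becomes $\mtPi U\vcx_{(\dots)}$ and concatenating recovers $\operatorname{Perm}(\mtPhi,\mtPi)$; if $U$ is permutation invariant ($U\mtPi=U$) then $U\mtPi\vcx_{(\dots)}=U\vcx_{(\dots)}$ and the whole transform is unchanged, giving $\hat{\mtPhi}=\mtPhi$.

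The one step that genuinely needs care — and where I expect the only real obstacle — is confirming that the pruning mechanism preserves equivariance. Since the induction shows each surviving feature transforms as $\mtPi\vcx_{(\dots)}$ and orthogonal $\mtPi$ preserves the $2$-norm, the normalized energies $\|\vcx_{(i,\dots)}\|/\|\vcx_{(\dots)}\|$ thresholded during pruning are identical for $\mtX$ and $\mthX$; hence exactly the same branches survive and the two CST trees share the same support, so the induction applies verbatim on the kept coefficients. The remaining verifications are routine once the covariance conjugation rule and the permutation invariance of $w_1$ and $\gamma$ are established.
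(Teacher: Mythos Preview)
Your proposal is correct and follows essentially the same route as the paper: conjugate the sample covariance by $\mtPi$, verify this passes to the wavelet operator (using that $w_1$ and $\gamma$ are spectrum-only and hence permutation invariant), push the conjugation through each wavelet matrix via its polynomial form, and then propagate through the layers using that $\rho$ is entrywise. Your treatment is in fact slightly more complete than the paper's, which does not explicitly address pruning; your norm-preservation argument for the thresholding step is a clean way to close that gap.
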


Theorem~\ref{th:perm_equivariance} establishes that CSTs provide permutation equivariant or invariant representations depending on the design of $U$, which extends analogous results in other scattering transforms~\cite{gama2019stabilityscatter,bruna2013invariant} and respects the domain requirements.


\subsection{Stability to Covariance Perturbations}

In practice, the CST is instantiated on a sample covariance $\mthC$ that represents a perturbed version of the true one, i.e., $\mthC = \mtC + \mtE_C$ where $\mtE_C$ collects the estimation error. We study here how the output of the CST changes in relation to this finite-sample error. 
We begin by bounding the output difference of covariance wavelets, i.e., given the wavelet $\mtH_j(\cdot)$ instantiated on true and sample operators $\mtT$ and $\mthT$, we are interested in the quantity 
\begin{equation}\label{eq:stab_def}
\| \mtH(\mtT) - \mtH(\mthT) \| = \min\{ c \geq 0 : \| \mtH(\mtT)\vcx - \mtH(\mthT)\vcx \| \leq c\|\vcx\| \}. 
\end{equation}
Our analysis requires the following assumptions.
\begin{assumption}\cite[Theorem 5.6.1]{vershynin2018high}
\label{as_norm}
    Given a random variable $\vcx$ and constants $G\ge 1$, $\delta \approx 0$, it holds:
    \begin{equation}\nonumber
        \mathbb{P} \left( \|\vcx\|\leq G\sqrt{\mathbb{E}[\|\vcx\|^2]}\right) \ge 1 - \delta.
    \end{equation}
\end{assumption}
\begin{assumption}\cite[Theorem 4.1]{loukas2017howclose}
\label{as_eig_diff}
    The eigenvalues $\{w_i\}_{i=1}^{N}$ and $\{\schw_i\}_{i=1}^{N}$ of the true and sample covariance matrix, respectively, satisfy for each pair $(w_i,w_j)$, $i\neq j$,
    \begin{align}\nonumber
        \textnormal{sign}(w_{i} - w_j)2\schw_{i} > \textnormal{sign}(w_{i} - w_{j})(w_{i} + w_{j}).
    \end{align}
\end{assumption}

Assumption~\ref{as_norm} quantifies the variance of the data distribution via the constant $G$, which is higher for data with higher variance. 
Assumption~\ref{as_eig_diff} considers the estimation error of sample covariance eigenvalues compared to the true ones, and holds for each eigenvalue pair $(w_i,w_j)$ with probability at least $1-2k_{i}^2/(N|w_{i} - w_{j}|)$, where $k_{i}=\left( \mathbb{E}[\|\vcx\vcx^\Tr\vcv_{i}\|^2]-w_{i}^2 \right)^{1/2}$ is a term related to the kurtosis of the data distribution~\cite[Corollary 4.2]{loukas2017howclose}.

We now provide the covariance wavelet stability result.
\begin{theorem}\label{lemma:wavelet_stability}
    Consider a covariance wavelet $\mtH_j(\cdot)$ with Lipschitz constant $P$ and let Assumptions 1,2 hold. The output difference of the wavelet computed on the true and perturbed covariance wavelet operators $\mtT$ and $\mthT$, respectively, is bounded with probability at least $(1-e^{-\epsilon})(1-2e^{-u})$ as 
    \begin{align}\label{eq:cov_wl_bound}
        \| \mtH_j(\mthT) - \mtH_j(\mtT) \| \leq \nonumber \\
        \frac{PN}{\sqrt{T}}(k_\textnormal{max}e^{\frac{\epsilon}{2}} + \frac{2QG\gamma \|\mtC\|}{w_1}\sqrt{\log{N} + u} ) + \mathcal{O}\left(\frac{1}{T}\right) := \Delta \nonumber
    \end{align}
    where $Q$ is an absolute constant, $k_{\textnormal{max}} = \max_jk_j$ with $k_j=\left( \mathbb{E}[\|\vcx\vcx^\Tr\vcv_j\|^2]-w_j^2 \right)^{1/2}$ related to the kurtosis of the data distribution, and $\epsilon, u >0$ are arbitrarily large constants. 
\end{theorem}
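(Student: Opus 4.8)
The plan is to reduce the operator-norm difference $\|\mtH_j(\mthT) - \mtH_j(\mtT)\|$ to a perturbation bound on the wavelet operator itself, and then to convert that into a finite-sample concentration statement about the sample covariance. First I would split the difference in the spectral domain, writing $\mtH_j(\mthT) - \mtH_j(\mtT) = \sum_i [h_j(\hat\lambda_i) - h_j(\lambda_i)]\,\hat{\vcv}_i\hat{\vcv}_i^\Tr + \sum_i h_j(\lambda_i)\,(\hat{\vcv}_i\hat{\vcv}_i^\Tr - \vcv_i\vcv_i^\Tr)$, where $\vcv_i,\hat{\vcv}_i$ and $\lambda_i,\hat\lambda_i$ are the eigenvectors and eigenvalues of $\mtT$ and $\mthT$. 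The first (eigenvalue) sum is controlled directly by the Lipschitz property of Property~\ref{lemma:lipschitz}, contributing a factor $P\max_i|\hat\lambda_i - \lambda_i|$. For the second (eigenvector) sum, a naive Davis--Kahan estimate would introduce exactly the inverse eigenvalue gaps we want to eliminate; the crucial point is that the eigenprojector rotation is large only where consecutive eigenvalues are nearly degenerate, and there the weights $h_j(\lambda_i)$ are almost equal by Lipschitz continuity, so the gap cancels. Assumption~\ref{as_eig_diff} enters here to guarantee that true and sample eigenvalues keep their ordering, so that $\hat{\vcv}_i$ is correctly paired with $\vcv_i$ and this cancellation is valid.

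Second, I would express the operator perturbation through the covariance estimation error. Since $\mtT$ is an affine, $w_1$-normalized function of $\mtC$, I would decompose $\mthT - \mtT = \tfrac{\gamma}{w_1}(\mthC-\mtC) + \gamma\,\mthC\bigl(\tfrac{1}{\schw_1}-\tfrac{1}{w_1}\bigr)$, isolating a pure covariance-estimation term and a normalization term driven by the error in the top eigenvalue $w_1 - \schw_1$. The first term I would bound entrywise: each entry of $\mthC-\mtC$ is a centered average of $T$ sub-Gaussian products, so it concentrates at rate $T^{-1/2}$; a union bound over the $\mathcal{O}(N^2)$ entries yields the $\sqrt{\log N + u}$ factor and the probability $1-2e^{-u}$, while Assumption~\ref{as_norm} supplies the variance proxy $G$ and the scale $\|\mtC\|/w_1$. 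Passing from this entrywise (Frobenius) control to the spectral norm via $\|\cdot\| \le \|\cdot\|_F \le N\max_{ij}|\cdot|$ accounts for the dimension factor $N$ and produces the term $\tfrac{2QG\gamma\|\mtC\|}{w_1}\sqrt{\log N + u}$.

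Third, for the normalization term I would invoke the eigenvalue perturbation result of~\cite{loukas2017howclose} underlying Assumption~\ref{as_eig_diff}: the error $|w_1 - \schw_1|$ concentrates with a tail governed by the kurtosis-type quantity $k_1 \le k_{\textnormal{max}}$, contributing a term of order $k_{\textnormal{max}}e^{\epsilon/2}/\sqrt{T}$ with probability $1-e^{-\epsilon}$, the factor $e^{\epsilon/2}$ tracking the chosen tail level. Collecting both contributions under the common $PN/\sqrt{T}$ prefactor, combining the two high-probability events by a union bound into $(1-e^{-\epsilon})(1-2e^{-u})$, and absorbing the products of two first-order errors (the second-order remainder from the spectral reduction and the cross term in the normalization expansion) into $\mathcal{O}(1/T)$, delivers the stated bound $\Delta$.

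I expect the main obstacle to be the first step: obtaining a gap-\emph{independent} operator-norm bound on $\mtH_j(\mthT)-\mtH_j(\mtT)$. The eigenvector sum is precisely where PCA-style estimates such as~\eqref{eq:pca_bound} acquire the $(\min_{i\neq j}|w_i-w_j|)^{-1}$ dependence, and the entire point of the theorem is that covariance wavelets avoid it. Making the cancellation rigorous --- either through a divided-difference / double-operator-integral argument in the Hilbert--Schmidt norm, or, for the diffusion and monic wavelets specifically, by exploiting their polynomial form $\mtH_j(\mtT)=\mtT^{2^{j-1}}-\mtT^{2^{j}}$ and telescoping $\mthT^{k}-\mtT^{k}=\sum_{r}\mthT^{r}(\mthT-\mtT)\mtT^{k-1-r}$, which never references eigenvectors at all --- is what forces the bound to scale with the Lipschitz constant $P$ rather than with the spectral gaps.
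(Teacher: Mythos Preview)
Your route differs from the paper's, and there is a concrete mis-attribution that makes the two additive terms in $\Delta$ appear for the wrong reasons. The paper does \emph{not} start from the spectral split $\sum_i[h_j(\hat\lambda_i)-h_j(\lambda_i)]\hat{\vcv}_i\hat{\vcv}_i^\Tr+\sum_i h_j(\lambda_i)(\hat{\vcv}_i\hat{\vcv}_i^\Tr-\vcv_i\vcv_i^\Tr)$; instead it uses exactly the polynomial telescoping you mention only as a fallback: write $\mtH_j(\mtT)=\sum_k h_k\mtT^k$ via Cayley--Hamilton, expand $\mthT^k-\mtT^k=\sum_{r}\mtT^{r}\mtE_T\mtT^{k-r-1}+\mathcal{O}(\|\mtE_T\|^2)$, pass to the eigenbasis of $\mtT$, and then split $\mtE_T\vcv_i=\mtB_i\delta\vcv_i+\delta\lambda_i\vcv_i+(\delta\lambda_i\mtI-\mtE_T)\delta\vcv_i$ with $\mtB_i=\lambda_i\mtI-\mtT$. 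The \emph{first} (eigenvector) piece is where~\cite{loukas2017howclose} is actually used: Theorem~4.1 there bounds the inner products $|\vcv_j^\Tr\vchv_i|$ by $2k_je^{\epsilon/2}/(\sqrt{T}|\lambda_i-\lambda_j|)$, and the factor $|\lambda_i-\lambda_j|$ is cancelled by the Lipschitz ratio $|h(\lambda_i)-h(\lambda_j)|/|\lambda_i-\lambda_j|\le P$ --- that cancellation is the source of the $k_{\max}e^{\epsilon/2}$ term and of the gap-independence. The \emph{second} (eigenvalue) piece is bounded by $P|\delta\lambda_i|\le P\|\mtE_T\|$ via Weyl, then $\|\mtE_T\|\le\tfrac{2\gamma}{w_1}\|\mtE_C\|$ (again by Weyl on $|w_1-\schw_1|$), and finally Vershynin's operator-norm concentration yields the $QG\sqrt{\log N+u}$ term with probability $1-2e^{-u}$.

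So your third step is the genuine gap: you invoke~\cite{loukas2017howclose} to control $|w_1-\schw_1|$, but Theorem~4.1 of that paper (which is what underlies Assumption~\ref{as_eig_diff}) is an \emph{eigenvector} concentration result, not an eigenvalue one; the kurtosis constants $k_j$ quantify $|\vcv_j^\Tr\vchv_i|$, not $|\schw_i-w_i|$. In the paper the normalization piece is handled trivially by Weyl, $|w_1-\schw_1|\le\|\mtE_C\|$, and both halves of $\|\mtE_T\|$ are absorbed into a single Vershynin bound --- no $k_{\max}$ appears there. Consequently, if your first step really did reduce everything to $PN\|\mthT-\mtT\|$ (as your Abel-summation/Davis--Kahan sketch suggests), only the $QG$-type term would survive and you would not recover the stated two-term form of $\Delta$. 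The $k_{\max}$ term enters precisely because the paper keeps the eigenvector perturbation explicit and bounds it with Loukas' result, rather than collapsing it through $\|\mthT-\mtT\|$.
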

This result provides three main insights. First, the bound increases with the sample dimension $N$, as larger covariance matrices are more difficult to estimate, and decreases with the number of observed samples $T$ with the rate $\mathcal{O}(T^{-1/2})$, as the sample covariance gets closer to the true one the more samples are available. This is in contrast with the stability bound of graph wavelets~\cite{gama2019stabilityscatter}, where the graph perturbation does not reduce with the number of samples. 
Second, compared to the PCA bound in~\eqref{eq:pca_bound}, the covariance wavelet stability does not depend on the covariance eigengap, which is absorbed by the Lipschitz constant $P$. This is crucial because the constant $P$ bounds the variability of the wavelet function $h_j(\lambda)$ w.r.t. the covariance eigenvalues, such that a smaller $P$ corresponds to more slowly-varying $h_j(\lambda)$ and, consequently, better stability at the cost of lower discriminability for close eigenvalues, as common in graph and covariance neural networks~\cite{gama2020stability,sihag2022covariance}. 
The constant $P$ depends on the wavelet definition and its parameters. For diffusion wavelets, $P$ increases as the scale $j$ increases, since a larger scale introduces sharper transitions in $h_j(\lambda)$ (see the Appendix for details on $P$ for Hann and monic wavelets).
Third, the bound is modulated by the largest covariance eigenvalue $w_1$ and the parameter $\gamma$ due to the covariance normalization. This is beneficial when $w_1>1$ and $\gamma < 1$, as the bound gets lower and errors are reduced. 

After establishing the stability of covariance wavelets, we proceed to provide a condition under which the pruned branches of the CST are the same on the perturbed and true covariance in the following proposition.
\begin{proposition}\label{lemma:prune_cond}
    Consider a CST $\mtPhi$ instantiated on true and sample covariance wavelet operators $\mtT$ and $\mthT$, respectively, with $T$ observations. Let the covariance wavelet $\mtH_j$ be Lipschitz with constant $P$ and form a frame with bound $B$. The pruned trees with threshold $\tau$ of $\mtPhi(\mtT, \vcx)$ and $\mtPhi(\mthT, \vcx)$ for a generic signal $\vcx$ are identical if, for all representations $\vcx_{(j_\ell,\dots,j_1)}$ at layer $\ell$ and $j = 0, \dots, J-1$, it holds:
    \begin{align}
        | \|\mtH_j(\mtT)\vcx_{(j_\ell,\dots,j_1)}\|^2 - \tau\|\vcx_{(j_\ell,\dots,j_1)}\|^2 | > \nonumber \\
        (\Delta B^{\ell-1} \|\vcx\|)^2 ( (\ell+1)B + \ell\tau ). \nonumber
    \end{align}
    where $\Delta$ is defined in Theorem~\ref{lemma:wavelet_stability}.
\end{proposition}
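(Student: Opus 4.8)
The plan is to reduce the claim to a per-node sign-preservation argument and then control the perturbation of the pruning decision variable via a layer-wise error-propagation bound. First, observe that the two pruned trees coincide precisely when, at every node both of them reach, the keep/discard decision agrees on $\mtT$ and $\mthT$. At a node with parent representation $\vcx_{(j_\ell,\dots,j_1)}$ and candidate child scale $j$, the rule keeps the child exactly when the decision variable $D := \|\mtH_j(\mtT)\vcx_{(j_\ell,\dots,j_1)}\|^2 - \tau\|\vcx_{(j_\ell,\dots,j_1)}\|^2$ is positive, while the sample-operator rule uses $\hat{D} := \|\mtH_j(\mthT)\hat{\vcx}_{(j_\ell,\dots,j_1)}\|^2 - \tau\|\hat{\vcx}_{(j_\ell,\dots,j_1)}\|^2$, where $\hat{\vcx}_{(j_\ell,\dots,j_1)}$ is the representation built with $\mthT$. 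Arguing by induction on depth, if all ancestors are treated identically then both trees reach the same parent node, so it suffices that $\textnormal{sign}(D)=\textnormal{sign}(\hat{D})$ at each such node, which holds whenever $|D| > |D - \hat{D}|$. The whole task therefore reduces to bounding $|D - \hat{D}|$ by the stated right-hand side.

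Second, I would establish two propagation bounds by induction on $\ell$, using that $\rho$ is nonexpansive with $\rho(0)=0$, that the frame upper bound of Property~\ref{lemma:frame} forces $\|\mtH_j\| \le B$, and that Theorem~\ref{lemma:wavelet_stability} gives $\|\mtH_j(\mtT) - \mtH_j(\mthT)\| \le \Delta$. The first is the norm bound $\|\vcx_{(j_\ell,\dots,j_1)}\| \le B^\ell\|\vcx\|$, immediate from $\|\rho(\mtH_j\vcz)\| \le \|\mtH_j\vcz\| \le B\|\vcz\|$. The second is the representation-error bound $\|\vcx_{(j_\ell,\dots,j_1)} - \hat{\vcx}_{(j_\ell,\dots,j_1)}\| \le \ell\,\Delta B^{\ell-1}\|\vcx\|$: inserting the mixed term $\mtH_{j_\ell}(\mthT)\vcx_{(j_{\ell-1},\dots,j_1)}$ and using nonexpansivity, the error at layer $\ell$ is at most $\Delta\|\vcx_{(j_{\ell-1},\dots,j_1)}\| + B\|\vcx_{(j_{\ell-1},\dots,j_1)} - \hat{\vcx}_{(j_{\ell-1},\dots,j_1)}\|$, and substituting the two inductive hypotheses closes the recursion.

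Third, I would bound the decision-variable gap. Splitting $|D - \hat{D}| \le |\,\|\mtH_j(\mtT)\vcx_{(j_\ell,\dots,j_1)}\|^2 - \|\mtH_j(\mthT)\hat{\vcx}_{(j_\ell,\dots,j_1)}\|^2\,| + \tau\,|\,\|\vcx_{(j_\ell,\dots,j_1)}\|^2 - \|\hat{\vcx}_{(j_\ell,\dots,j_1)}\|^2\,|$ and applying $|\,\|\vca\|^2 - \|\vcb\|^2\,| \le \|\vca-\vcb\|\,(\|\vca\|+\|\vcb\|)$ to each term, the key ingredient is $\|\mtH_j(\mtT)\vcx_{(j_\ell,\dots,j_1)} - \mtH_j(\mthT)\hat{\vcx}_{(j_\ell,\dots,j_1)}\| \le \Delta\|\vcx_{(j_\ell,\dots,j_1)}\| + B\|\vcx_{(j_\ell,\dots,j_1)} - \hat{\vcx}_{(j_\ell,\dots,j_1)}\|$, obtained once more via the mixed term. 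Substituting the frame bound $\|\vcx_{(j_\ell,\dots,j_1)}\|\le B^\ell\|\vcx\|$ and the propagation bound $\|\vcx_{(j_\ell,\dots,j_1)} - \hat{\vcx}_{(j_\ell,\dots,j_1)}\| \le \ell\Delta B^{\ell-1}\|\vcx\|$ from the previous step, the two contributions collect into the closed form $(\Delta B^{\ell-1}\|\vcx\|)^2((\ell+1)B + \ell\tau)$, after which the criterion $|D|>|D-\hat{D}|$ yields identical pruning decisions at every node, hence identical pruned trees.

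The main obstacle I anticipate is the representation-error induction of the second step: because each $\vcx_{(j_\ell,\dots,j_1)}$ is a nonlinear function of the operator, the error does not simply multiply but accumulates additively across layers, and the bookkeeping that turns the compounded per-layer estimates into the exact prefactor $(\ell+1)B + \ell\tau$ requires careful tracking of how the frame bound $B$ and the threshold $\tau$ enter each of the two squared-norm differences. By comparison, the sign-preservation reduction and the nonexpansivity estimates are routine.
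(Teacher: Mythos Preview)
Your proposal is correct and follows essentially the same route as the paper: the paper states your second-step recursion as a standalone lemma (the bound $\|\vcx_{(j_\ell,\dots,j_1)}-\hat{\vcx}_{(j_\ell,\dots,j_1)}\|\le \ell\,\Delta B^{\ell-1}\|\vcx\|$, proved with the same mixed-term insertion and frame/nonexpansivity estimates you outline) and defers the sign-preservation reduction and decision-variable bound of your first and third steps to \cite[(56)--(63)]{ioannidis2020efficient}, then substitutes the lemma into that condition to obtain the stated inequality. Your anticipated obstacle is well placed: the bookkeeping that produces the exact prefactor $(\ell+1)B+\ell\tau$ is indeed the delicate part, and the paper handles it by invoking the squared-error form from the cited reference rather than the product bound $|\,\|\vca\|^2-\|\vcb\|^2\,|\le\|\vca-\vcb\|(\|\vca\|+\|\vcb\|)$ you propose, which would yield a right-hand side linear (not quadratic) in $\Delta$.
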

This condition depends on the design of the CST via $\tau$, $B$, $P$ and the wavelet filter $\mtH_j(\cdot)$, on data characteristics via $\mtC$ and $N$ and on the number of observed samples $T$. In particular, the condition becomes more likely as $T$ increases, since the covariance estimation improves and the perturbation affects the pruning less. 
Moreover, the condition becomes more likely for smaller $\tau$ as the left-hand term increases while the right-hand term decreases. 

With this in place, we investigate the stability to covariance estimation errors of CST.
\begin{theorem}\label{th:cst_stab}
Consider a CST $\mtPhi(\cdot)$ with $L$ layers and $J$ scales operating on a true covariance operator $\mtT$ and sample operator $\mthT$ estimated from $T$ samples. Let $\Delta$ and $B$ be the largest stability and frame bounds, respectively, among the wavelets in the CST, and $\|U\|\leq B_U$, and let the condition in Proposition~\ref{lemma:prune_cond} hold. 
The distance between the CST representations operating on the true and estimated operators $\mtT$, $\mthT$ can be upper-bounded as
\begin{align}
    \|\mtPhi(\mtT,\vcx) - \mtPhi(\mthT,\vcx)\| \leq B_U\Delta\|\vcx\|\sqrt{\sum_{\ell=1}^{L-1} \ell^2 B^{2\ell-2}F_\ell }. \nonumber
    \end{align}    
where $F_\ell \leq J^\ell$ is the number of selected scattering features at layer $\ell$. 
\end{theorem}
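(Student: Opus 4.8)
The plan is to reduce the bound on the full concatenated representation to a \textbf{per-path} estimate and then reassemble using the Pythagorean structure of the concatenation. Fix a path $(j_\ell,\dots,j_1)$ and write $\vcx_{(j_\ell,\dots,j_1)}$ and $\hat{\vcx}_{(j_\ell,\dots,j_1)}$ for the scattering features produced on $\mtT$ and $\mthT$, respectively, and set $d_\ell = \|\vcx_{(j_\ell,\dots,j_1)} - \hat{\vcx}_{(j_\ell,\dots,j_1)}\|$. Because the pruning condition of Proposition~\ref{lemma:prune_cond} is assumed to hold, the pruned trees on $\mtT$ and $\mthT$ retain exactly the same surviving paths, so the two concatenated vectors $\mtPhi(\mtT,\vcx)$ and $\mtPhi(\mthT,\vcx)$ are aligned coefficient by coefficient and their difference is well defined; this alignment is what makes the later sum over the $F_\ell$ surviving paths at each layer meaningful.

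First I would establish a recursion in $\ell$. Starting from $\vcx_{(j_\ell,\dots,j_1)} = \rho(\mtH_{j_\ell}(\mtT)\vcx_{(j_{\ell-1},\dots,j_1)})$ and the analogous expression on $\mthT$, I use that $\rho$ is non-expansive with $\rho(0)=0$ (as for the ReLU or modulus standard in scattering transforms) to drop $\rho$, then add and subtract $\mtH_{j_\ell}(\mthT)\vcx_{(j_{\ell-1},\dots,j_1)}$ to split the error into a wavelet-perturbation term and a propagated-error term:
\begin{align}
    d_\ell \leq \|\mtH_{j_\ell}(\mtT)-\mtH_{j_\ell}(\mthT)\|\,\|\vcx_{(j_{\ell-1},\dots,j_1)}\| + \|\mtH_{j_\ell}(\mthT)\|\,d_{\ell-1}. \nonumber
\end{align}
The first norm is at most $\Delta$ by Theorem~\ref{lemma:wavelet_stability}. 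Each individual wavelet obeys $\|\mtH_j\|\leq B$, since a single term of the frame sum in Property~\ref{lemma:frame} is dominated by the whole sum; combined with non-expansivity of $\rho$ this gives $\|\vcx_{(j_{\ell-1},\dots,j_1)}\|\leq B^{\ell-1}\|\vcx\|$ and $\|\mtH_{j_\ell}(\mthT)\|\leq B$. Hence $d_\ell \leq \Delta B^{\ell-1}\|\vcx\| + B\,d_{\ell-1}$ with $d_0=0$. Dividing by $B^\ell$ telescopes the recursion, yielding the closed form $d_\ell \leq \ell\,\Delta B^{\ell-1}\|\vcx\|$.

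To finish, I apply the aggregation $U$ with $\|U\|\leq B_U$, so each coefficient error obeys $\|U\vcx_{(j_\ell,\dots,j_1)}-U\hat{\vcx}_{(j_\ell,\dots,j_1)}\| \leq B_U\,d_\ell \leq B_U\,\ell\,\Delta B^{\ell-1}\|\vcx\|$. Since $\mtPhi$ simply stacks these coefficients, the squared total error equals the sum of the squared coefficient errors: layer $0$ contributes nothing because $d_0=0$, while at layer $\ell$ each of the $F_\ell$ surviving paths contributes $(B_U\ell\Delta B^{\ell-1}\|\vcx\|)^2$. Summing over $\ell=1,\dots,L-1$ and taking the square root produces exactly the claimed bound.

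I expect the analytically delicate point to be not the recursion itself but the guarantee that the two pruned trees coincide, so that the outputs live in the same coordinate space and the error is even expressible as an entrywise difference; this is precisely why Proposition~\ref{lemma:prune_cond} enters as a hypothesis. A secondary point to handle carefully is the non-expansivity of $\rho$, which simultaneously underlies the feature-norm bound $\|\vcx_{(j_{\ell-1},\dots,j_1)}\|\leq B^{\ell-1}\|\vcx\|$ and the removal of $\rho$ in the recursion, and which should be stated explicitly as a standing assumption on the activation.
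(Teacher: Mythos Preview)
Your proposal is correct and follows essentially the same route as the paper: the paper isolates the per-path bound $d_\ell \leq \ell\,\Delta B^{\ell-1}\|\vcx\|$ as a separate lemma (derived via the same add-and-subtract / non-expansive-$\rho$ / frame-bound recursion you set up, only with the cross term $\mtH_{j_\ell}(\mtT)\hat{\vcx}_{(j_{\ell-1},\dots,j_1)}$ rather than your $\mtH_{j_\ell}(\mthT)\vcx_{(j_{\ell-1},\dots,j_1)}$), and then assembles the concatenation exactly as you do. Your explicit remarks on why Proposition~\ref{lemma:prune_cond} is needed for coordinate alignment and on the standing non-expansivity assumption on $\rho$ are well placed and match the paper's implicit use of both.
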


Theorem~\ref{th:cst_stab} proves that CSTs are stable to covariance perturbations proportionally to the stability of a single wavelet $\Delta$. The stability bound increases with the total number of active scattering features (such that more pruning leads to better stability), number of layers $L$ and frame bound $B$.
$F_\ell $ increases with increasing $J$ and decreasing $\tau$ (e.g., $F_\ell = J^\ell$ for $\tau=0$), making the CST less stable while improving its expressivity via more coefficients at different scales. A proper choice of pruning threshold $\tau$ can help in this tradeoff by allowing for wavelets at larger scales $j$ that carry relevant information, while pruning less informative wavelets at smaller scales.
The number of scales $J$ also increases the wavelet bound $\Delta$ for diffusion wavelets, as more scales lead to sharper wavelet transitions and larger $P$. 
The CST compensates the reduced expressivity of a single wavelet by the cascade of wavelet filterbanks interleaved with nonlinearities, which spread information across the frequency spectrum and increase the discriminability at deeper layers, reiterating the advantage of using deep architectures~\cite{isufi2024graphfilters}. 
Compared to VNNs~\cite[Theorems 1-2]{sihag2022covariance}, this bound can be made smaller via a larger $\tau$, whereas VNNs do not have any pruning mechanism. Furthermore, the Lipschitz constant of VNNs depends on the training dynamics, whereas CSTs are untrained and their Lipschitz constant depends on the choice of wavelet functions and scales, thus it can be defined a priori. 
VNNs also achieve a tradeoff between stability and expressivity thanks to their hierarchical deep architecture, but they require labeled data for parameter training while CSTs achieve the same result without any training.

\begin{figure*}[t]
    \centering
    \includegraphics[width=1\linewidth]{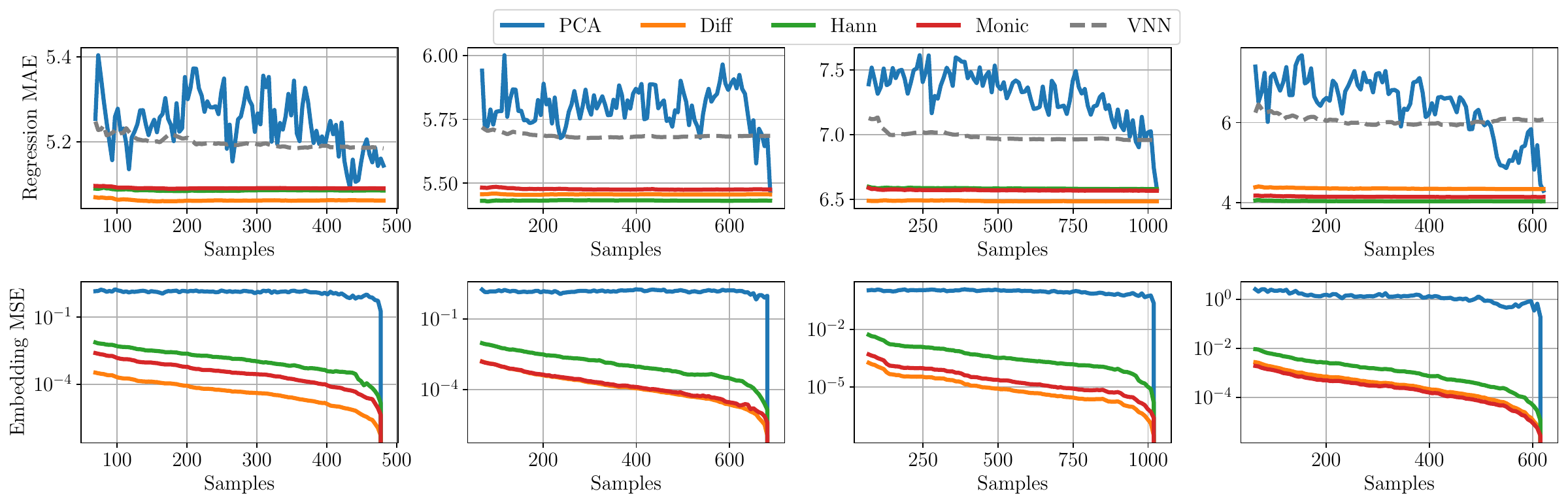}
    \caption{Age prediction Mean Average Error (MAE) and embedding Mean Squared Error (MSE) for increasing number of samples for CSTs, VNN and PCA on, from left to right, ADNI1, ADNI2, PPMI and Abide.  }
    \label{fig:stab_real}        
\end{figure*}

\section{Numerical Results}

We evaluate CSTs with the following objectives: \textbf{(O1)} validate their stability to covariance estimation errors; \textbf{(O2)} show the effectiveness of their representations for downstream tasks; \textbf{(O3)} assess the impact of pruning on performance and time and parameter efficiency.

\subsection{Setup}
\noindent \textbf{Datasets.}
We consider four datasets containing cortical thickness measurements extracted from MRI scans for patients with a specific disease and healthy control patients: \textbf{ADNI1} and \textbf{ADNI2}~\cite{jack2008alzheimer}, \textbf{PPMI}~\cite{marek2011parkinson} and \textbf{Abide}~\cite{craddock2013neuro} (details in Table~\ref{tab:datasets}). 
Cortical thickness is the thickness of the cerebral cortex in various regions of brain and is correlated to a patient's age, as it tends to thin when patients get older.
We model the brain areas as $N$ nodes and the thickness measures as node signals. Each of the $T$ patients is an observation and the covariance among thickness measures constitutes our graph.
Given this setup, our downstream regression task is to predict the patient's age, which is of high interest to identify neurodegenerative diseases or accelerated brain aging corresponding to a gap between predicted brain age and chronological age~\cite{sihag2024explainable,bashyam2020mri,yin2023anatomically}. 

\noindent \textbf{Models and baselines.}
We compare the 3 proposed implementations of CSTs (diffusion, Hann and monic) to 
(i) PCA, which processes covariance information in an unsupervised manner, but lacks stability and expressivity;
(ii) VNN~\cite{sihag2022covariance}, which achieves stability and expressive covariance manipulation via supervised training;
(iii) a ridge regressor on the raw features, which does not consider covariance information.
For the downstream task, we feed the representations of CSTs and PCA to a ridge regressor.
We optimize all hyperparameters via grid search on a validation set. We report the grids and the final choices in Table~\ref{tab:params} in the Appendix. We repeat every experiment over 10 different splits.
We report additional experiments on controlled setups, more baselines and ablations on the real datasets in the Appendix.
For all regression tasks, standard deviations are of the order $10^{-1}$. We do not plot them for visual clarity, but we report them in Table~\ref{tab:extra_comparisons} in the Appendix. 

\begin{table}[t]
\centering
\small
\begin{tabular}{c|cccc}
\toprule
\textbf{} & \textbf{ADNI1} & \textbf{ADNI2} & \textbf{PPMI} & \textbf{Abide} \\
\midrule
\textbf{Patients} ($T$) & 801 & 1142 & 1704 & 1035 \\
\textbf{Brain areas} ($N$) & 68 & 68 & 68 & 62 \\
\bottomrule
\end{tabular}
\caption{Dataset characteristics.}
\label{tab:datasets}
\end{table}

\begin{figure*}[t]
    \centering
    \includegraphics[width=1\linewidth]{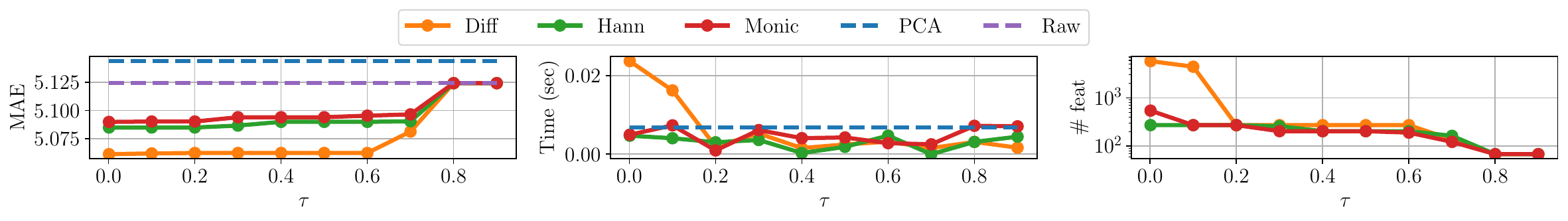}
    \caption{Impact of different thresholds $\tau$ for pruning on regression MAE, execution time and number of features on ADNI1. }
    \label{fig:pruning_adni1}        
\end{figure*}

\subsection{Stability and Regression Performance}

\noindent \textbf{Experimental setup.}
We keep $50\%$ of the data as unlabeled (i.e., we do not use the age information of these patients for the downstream task), and we split the remaining as $10\%$ for training, $20\%$ for validation and $20\%$ for testing. Let $\mathcal{U}$ be the union of the unlabeled and training sets.
We compute the CST on $\mathcal{U}$ and we use it to produce representations for the test set, which we feed to a ridge regressor for the downstream task. 
To investigate the role of finite-sample covariance estimation errors, we recompute the CST using a covariance estimated from a subset of $\mathcal{U}$. With this perturbed CST, we produce new representations for the test samples, which we feed to the previously trained regressor for the downstream task. We do not perform thresholding.

\noindent \textbf{Discussion.}
Figure~\ref{fig:stab_real} shows that CSTs maintain a consistent regression performance under covariance perturbations, demonstrating the empirical advantages of their stability \textbf{(O1)}. This significantly improves over PCA-based preprocessing, which suffers large instabilities in low-sample regimes, ultimately leading to significantly worse performance. The same observations stem from the embedding MSE (i.e., the quantity bounded in Theorem~\ref{th:cst_stab}), which is contained for CSTs, while it grows larger for PCA.
Moreover, CSTs' representations generally achieve better performance than PCA, VNN and raw features, demonstrating the informativeness of such embeddings and the usefulness of spectrally-localized information \textbf{(O2)}.
VNN, while stable to covariance perturbations, achieves overall worse performance than CSTs, likely due to its higher training complexity which requires larger quantities of labeled data, whereas CSTs can exploit the unlabeled samples to produce meaningful representations.

\begin{figure}[t]
    \centering
    \includegraphics[width=1\linewidth]{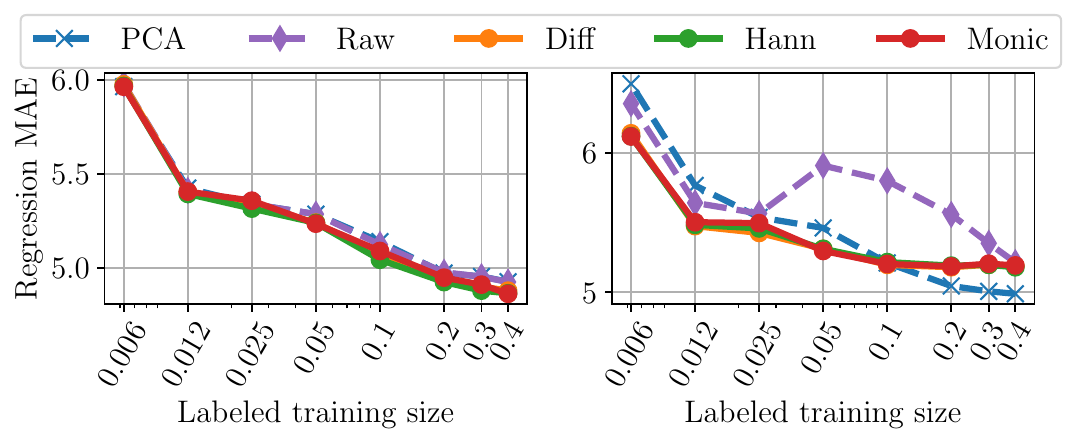}
    \caption{MAE for different labeled data sizes on ADNI1 for $U$ as identity operator (left) and $U$ as mean operator (right).}
    \label{fig:lab_size_adni1}        
\end{figure}

\subsection{Additional Results}

We assess the impact of pruning, labeled data size and aggregation on ADNI1. The results for the other datasets are presented in the Appendix. We keep the same hyperparameter configuration as in Figure~\ref{fig:stab_real} except for the aggregation experiments where we re-optimize parameters on splits with $59.4\%$ unlabeled - $0.6\%$ train - $20\%$ valid - $20\%$ test sizes.

\noindent \textbf{Impact of pruning.}
We evaluate the impact of pruning on model performance, time efficiency and number of selected features \textbf{(O3)}. Figure~\ref{fig:pruning_adni1} shows that increasing $\tau$ does not lead to drastic changes in regression MAE, whereas it decreases more significantly the number of features and execution time, leading to faster computations for CSTs compared to PCA. 
Therefore, the adopted pruning approach can significantly impact the time and memory requirements of the CST, while guaranteeing expressive data representations. 

\noindent \textbf{Impact of size of labeled dataset.}
We evaluate the CST's performance when the size of labeled data for training varies from $0.6\%$ to $40\%$ of the total dataset, while validation and test sets remain fixed to $20\%$ each and the remaining portion of data is unlabeled. 
Figure~\ref{fig:lab_size_adni1} (left) shows that CSTs perform similarly or slightly better than PCA and raw features for all labeled data sizes. This corroborates the capability of CSTs to capture relevant information from unlabeled data, which is not exploited by pure learning approaches. 

\noindent \textbf{Dimensionality reduction.}
Finally, we evaluate the CST's performance when aggregating the scattering features via a mean operation for varying size of training data. 
Figure~\ref{fig:lab_size_adni1} (right) shows that reducing the data dimensions leads to advantages for low-training-data settings, where CSTs outperform raw features and PCA. For larger training data size, however, PCA leads to better results, as the regressor can effectively exploit its extensive information which is lost during the averaging aggregation of CST.

\section{Related Works}

\noindent \textbf{Covariance-based learning.}
Covariance information is at the basis of several data processing techniques. In the unsupervised domain, PCA~\cite{Jolliffe2002pca} and factor analysis~\cite{child2006essentials}, among others, are popular as they represent data in a low-rank space, where spurious correlations are removed and data dimension can be reduced.
However, PCA is generally unstable to finite-sample covariance estimation errors, leading to unreliable component estimation in low-data regimes. This issue has been tackled by coVariance Neural Networks (VNNs)~\cite{sihag2022covariance}, which rely on labeled data to estimate robust representations even in low-data regimes. This characteristic has made VNNs successful in a variety of settings, ranging from interpretable brain age estimation~\cite{sihag2024explainable, sihag2022covariance} to temporal data~\cite{cavallo2024stvnn}, sparse covariances~\cite{cavallo2024sparsecovarianceneuralnetworks,cavallo2025precision} and biased datasets~\cite{cavallo2025fair}.
Despite their success, VNNs and extensions need large quantities of labeled data for training, which may be unfeasible in practice. 
In this work, we provide a more flexible and robust framework to process data via their covariance information in an untrained manner.

\noindent \textbf{Wavelets and scattering transforms.} 
Wavelet transforms are popular tools in time, image and graph signal processing due to their space and frequency localization that allows for efficient signal representation and processing~\cite{mallat1999wavelet, hammond2011wavelets, shuman2015spectrum}. 
Scattering transforms are cascades of wavelets interleaved with nonlinearities that achieve untrained deep hierarchical representations, and have been shown successful in a variety of domains ranging from images~\cite{bruna2013invariant} to graphs~\cite{gama2019diffusion,gama2020stability,koke2022graph}, audio~\cite{anden2011multiscale} and simplicial complexes~\cite{madhu2024unsupervised}. Their main advantages are their stability to domain and signal perturbations as well as their capability to extract expressive frequency patterns in an untrained way.
In this work, we build on this literature to propose scattering transforms on covariance matrices, study their link with PCA and VNNs via covariance spectrum processing and their increased stability to finite-sample estimation errors.

\section{Conclusions}

We introduced Covariance Scattering Transforms (CSTs), untrained hierarchical deep networks that generate expressive representations for data by manipulating their covariance matrix. We proved that CSTs are permutation equivariant, achieve stability to covariance perturbations and produce rich data embeddings that lead to good performance on an age prediction task from cortical thickness measurements in four different datasets.
CSTs present a tradeoff between increasing the sample dimension for improved expressivity or reducing it via low-pass aggregations that lose information. Future work will address this aspect and propose more effective aggregation functions. Moreover, the application of this framework to other settings where covariance information plays a crucial role, such as financial data and sensor measurements, represents a promising extension.

\section{Acknowledgements}
Part of this work was funded by the TU Delft AI Labs program, the NWO OTP GraSPA proposal \#19497, the NWO VENI proposal 222.032.


\bibliography{bibliography}

\newpage 

\appendix
\clearpage

\section{Covariance Graph and VNNs}

We provide additional details on the graph built from the covariance matrix which is useful to interpret the setting of coVariance Neural Networks (VNNs)~\cite{sihag2022covariance} and covariance wavelets. 
Given a sample $\vcx = [x_1, x_2, \dots, x_N]^\Tr \in \mathbb{R}^N$ with covariance $\mtC$, consider a weighted graph with $N$ nodes where each node $i$ has a signal $x_i$ corresponding to the $i$-th entry of $\vcx$ and the edge between nodes $i$ and $j$ is given by the covariance value among the corresponding features $c_{ij}$ (see Figure~\ref{fig:cov_graph} for a visualization).
The covariance graph is a useful interpretation at the basis of VNNs, which perform graph convolutions on it. 
Specifically, a covariance filter performs the operation
\begin{equation}
    \vcz = \mtH(\mtC)\vcx = \sum_{k=0}^K h_k \mtC^k \vcx
\end{equation}
where $K$ is the filter order and $h_k$ are learnable parameters.
A VNN layer consists of a filterbank of $F_{l-1}\times F_{l}$ covariance filters and a nonlinearity $\sigma(\cdot)$, i.e., 
\begin{align}
\label{eq:vnn_layer}
     \vcz^l_f = \sigma\left(\sum_{g = 1}^{F_{l-1}}\mtH^l_{fg}(\mtC)\vcz^{l-1}_g\right)~f=1,\ldots,F_{l},~l = 1, \ldots, L
\end{align}
and produces outputs $\{\vcz^{l}_f\in\mathbb{R}^{N}\}_{f=1}^{F_l}$.
The $f$-th covariance filter bank contains $F_{l-1}$ covariance filters $\{ \mtH_{fg}^l(\mthC) \}_{g=1}^{F_{l-1}}$ that separately process each of the signals generated at the previous layer $\{\mathbf{z}_g^{l-1}\in\mathbb{R}^{N}\}_{g=1}^{F_{l-1}}$, generating the vector $\vcz_f^l$.
The input is $\{\mathbf{z}_g^0 = \vcx_g\}_{g=1}^{F_0}$ where $F_0$ is the node feature size, and the output of the last layer $\mathbf{z}^L$ contains the final representations generated by the VNN, which are generally fed to a readout layer for the final task.

The connection between covariance filters and PCA becomes apparent via their spectral analysis.
Consider the graph Fourier transform of a filter
\begin{align}
    \mtV^\Tr \vcz &= \mtV^\Tr \mtH(\mtC)\vcx = \mtV^\Tr \sum_{k=0}^K h_k \mtC^k \vcx \\
    &= \mtV^\Tr \sum_{k=0}^K h_k \mtV \mtW^k \mtV^\Tr \vcx \\
    &= \sum_{k=0}^K h_k \mtW^k \mtV^\Tr \vcx.
\end{align}
where $\mtC = \mtV\mtW\mtV^\Tr$ is the covariance eigendecomposition.
That is, a covariance filter learns to scale the principal components of $\vcx$, i.e., $\mtV^\Tr\vcx$, via polynomial coefficients $h_k \mtW^k$.


\begin{figure}
    \centering
    \includegraphics[width=.7\linewidth]{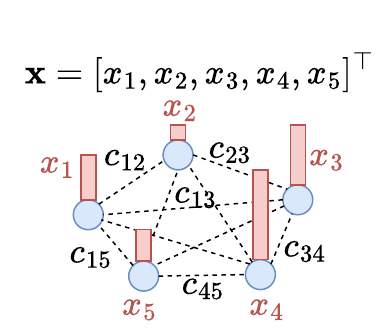}
    \caption{Covariance graph. 
    Each feature in a signal $\vcx$ becomes a node in a graph, the corresponding value becomes a node signal (i.e., one value per node, the red bars) and the edges are the covariance values.}
    \label{fig:cov_graph}
\end{figure}

\section{Covariance Wavelet Definitions and Properties}

We report here the definitions for diffusion, monic cubic and tight Hann wavelets with details on their frame bounds and Lipschitz constants. We refer to Figure~\ref{fig:motivation_example} for a visual comparison among the different wavelets. 
We begin by stating a Lemma that extends~\cite[Theorem 5.8]{hammond2011wavelets} to covariance wavelets and will be useful to analyze the properties of the specific wavelets.

\begin{lemma}\label{lemma:cov_bounds}
    The frame bounds $A$ and $B$ for the covariance wavelet depend on the eigenvalues as:
\begin{align*}
    &A^2 = \min_{i=1, \dots, N} G(\lambda_i)\\
    &B^2 = \max_{i=1, \dots, N} G(\lambda_i)\\
    &G(\lambda) = \sum_{j=0}^{J-1} h_j^2(\lambda).
\end{align*}
\end{lemma}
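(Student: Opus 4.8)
The plan is to reduce the frame inequality to a scalar statement in the covariance spectral domain, exploiting the orthogonality of the eigenvector matrix $\mtV$. First I would write each wavelet operator in its diagonal form $\mtH_j = \mtV \operatorname{diag}(h_j(\lambda_1), \dots, h_j(\lambda_N)) \mtV^\Tr$, and introduce the spectral coefficients $\vctx = \mtV^\Tr \vcx$. Since $\mtV$ is orthogonal, applying $\mtV$ preserves the $2$-norm, so $\|\mtH_j \vcx\|^2 = \|\operatorname{diag}(h_j(\lambda_1), \dots, h_j(\lambda_N)) \vctx\|^2 = \sum_{i=1}^N h_j^2(\lambda_i)\,[\vctx]_i^2$.

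Next I would sum this identity over the $J$ wavelet scales and swap the two finite sums, which collects the per-eigenvalue factor $\sum_{j=0}^{J-1} h_j^2(\lambda_i) = G(\lambda_i)$. This yields the exact expression $\sum_{j=0}^{J-1} \|\mtH_j \vcx\|^2 = \sum_{i=1}^N G(\lambda_i)\,[\vctx]_i^2$, i.e., the total wavelet energy is a weighted sum of squared spectral coefficients with weights $G(\lambda_i)$.

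The bounds then follow by sandwiching the weights between their extreme values: replacing every $G(\lambda_i)$ by $\min_i G(\lambda_i)$ from below and by $\max_i G(\lambda_i)$ from above gives $\min_i G(\lambda_i) \sum_i [\vctx]_i^2 \leq \sum_{j=0}^{J-1} \|\mtH_j \vcx\|^2 \leq \max_i G(\lambda_i) \sum_i [\vctx]_i^2$. Finally I would invoke orthogonality once more through $\sum_i [\vctx]_i^2 = \|\vctx\|^2 = \|\mtV^\Tr \vcx\|^2 = \|\vcx\|^2$ to identify the extremal weights as $A^2 = \min_i G(\lambda_i)$ and $B^2 = \max_i G(\lambda_i)$, matching Property~\ref{lemma:frame}.

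I do not expect a substantial obstacle here, as the argument is a direct spectral computation in the spirit of~\cite[Theorem 5.8]{hammond2011wavelets}; the only points requiring care are the bookkeeping of the scale index (the filterbank $\{h_j\}_{j=0}^{J-1}$ contains exactly the $J$ functions referenced in Property~\ref{lemma:frame}) and the observation that all the $\mtH_j$ share the common eigenbasis $\mtV$, which holds by construction since each wavelet is a spectral function of the single operator $\mtT = \mtV \mtLambda \mtV^\Tr$.
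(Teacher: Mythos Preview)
Your proposal is correct and follows essentially the same approach as the paper's own proof: diagonalize each $\mtH_j$ via the shared eigenbasis $\mtV$, use orthonormality to reduce $\sum_j \|\mtH_j\vcx\|^2$ to $\sum_i G(\lambda_i)[\mtV^\Tr\vcx]_i^2$, and sandwich the weights between their minimum and maximum. The only cosmetic difference is that the paper writes out the quadratic form $\vcx^\Tr\mtV h_j^2(\mtLambda)\mtV^\Tr\vcx$ explicitly before collapsing it, whereas you invoke the norm-preserving property of $\mtV$ directly.
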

\begin{proof}
    Consider the quantity $\sum_{j=0}^{J-1} \|\vcx_{(j)}\|^2$. We can expand it as 
    \begin{align}
        \sum_{j=0}^{J-1} \|\vcx_{(j)}\|^2 &= \sum_{j=0}^{J-1} \| \mtH_j(\mtT) \vcx\|^2 \\
        &= \sum_{j=0}^{J-1} \| \mtV h_j(\mtLambda)\mtV^\Tr \vcx \| ^ 2 \\
        & = \sum_{j=0}^{J-1}  (\mtV h_j(\mtLambda)\mtV^\Tr \vcx)^\Tr (\mtV h_j(\mtLambda)\mtV^\Tr \vcx) \\
        & = \sum_{j=0}^{J-1} \vcx^\Tr\mtV h_j(\mtLambda) \mtV^\Tr\mtV h_j(\mtLambda)\mtV^\Tr \vcx \\
        & = \sum_{j=0}^{J-1} \vcx^\Tr\mtV h_j^2(\mtLambda)\mtV^\Tr \vcx \\
        & = \sum_{j=0}^{J-1} \| h_j(\mtLambda)\mtV^\Tr  \vcx \|^2 \\
        & = \sum_{j=0}^{J-1} \sum_{i=1}^{N} h^2_j(\lambda_i) [\mtV^\Tr  \vcx]_i^2
    \end{align}
    We then get
    \begin{align}
\sum_{j=0}^{J-1} \sum_{i=1}^N h^2_j(\lambda_i)[\mtV^\Tr\vcx]_i^2 & \geq       \min_{i=1, \dots, N} \left(\sum_{j=0}^{J-1} h^2_j(\lambda_i)\right)  \sum_{i=1}^N [\mtV^\Tr\vcx]_i^2 \\
& =  \min_{i=1, \dots, N} \left(\sum_{j=0}^{J-1} h^2_j(\lambda_i)\right) \|\mtV^\Tr\vcx\|^2 \\
& =  A^2 \|\vcx\|^2,
    \end{align}
where we used the property that $\|\mtV^\Tr\vcx\| = \| \vcx \|$ since $\mtV$ is orthonormal and we defined $G(\lambda_i) = \sum_{j=0}^{J-1} h^2_j(\lambda_i)$.
An analogous derivation holds for $B^2$.
\end{proof}

We also state a property that is useful in the following.

\begin{property}[Cayley-Hamilton]\label{lemma:cov_wl}
A covariance wavelet $h_j(\lambda)$  instantiated on $N$ eigenvalues can be written as a polynomial of order at most $N-1$ (cf. Cayley-Hamilton theorem~\cite[Theorem 2.4.2]{horn2012matrix}). That is, the covariance wavelet can be written as a convolutional covariance filter $\mtH_j(\mtT) = \sum_{k=0}^K h_k \mtT^k $ with $K=N-1$. 
\end{property}
Specifically, Property~\ref{lemma:cov_wl} allows to analyze the properties of a wavelet via its equivalent covariance convolutional filter, which we use in the proofs. 

We now proceed to define and characterize the three covariance wavelets. 

\subsection{Covariance Diffusion Wavelets}

Given a covariance $\mtC$ and its corresponding wavelet operator with eigendecomposition $\mtT = \mtV\mtLambda\mtV^\Tr$ parametrized by $\gamma$ (cf. wavelet definition in main text), we define the covariance diffusion wavelet function as
\begin{equation}
    h_j(\lambda) = \lambda^{2^{j-1}} - \lambda^{2^{j}}
\end{equation}
and $h_0(\lambda) = 1-\lambda$. Given a filterbank with $J$ scales, we set $\gamma  = \left( 1/2 \right)^{1/2^{J-2}}$ such that the $J$-th wavelet is centered on the largest eigenvalue of $\mtT$.

\begin{figure*}[t]
    \centering
    \includegraphics[width=\linewidth]{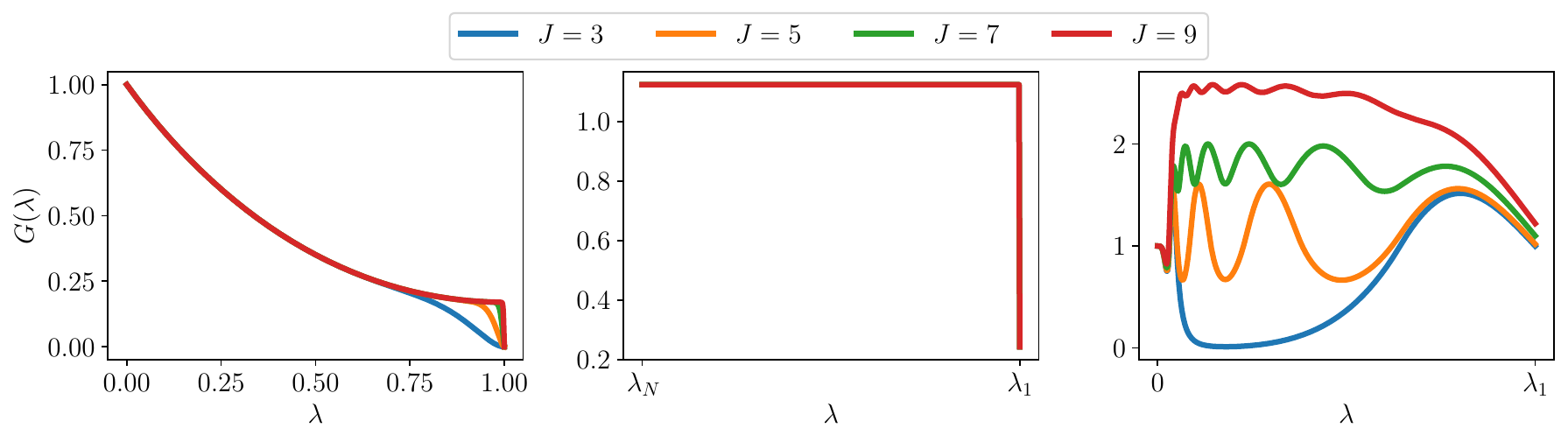}
    \caption{$G(\lambda)$ for diffusion (left), Hann (center) and monic (right) wavelets.} 
    \label{fig:g_lambda}        
\end{figure*}

\noindent \textbf{Frame bounds.} 
To compute the frame bounds $A,B$ for the covariance diffusion wavelets, we apply Lemma~\ref{lemma:cov_bounds} with $G(\lambda) = h_0^2(\lambda) + \sum_{j=1}^{J-1} h_j^2(\lambda) = (1-\lambda)^2 + \sum_{j=1}^{J-1} (\lambda^{2^{j-1}} - \lambda^{2^{j}})$.
Since $G(\lambda) \leq 1$ for $\lambda \in [0,1]$ (cf. Figure~\ref{fig:g_lambda} (left)),  $B$ can be picked to be 1.
Moreover, $G(\lambda) \geq (1-\lambda)^2$ since $\sum_{j=1}^{J-1} (\lambda^{2^{j-1}} - \lambda^{2^{j}})$ is positive in the domain and therefore $A^2$ can be picked to be $(1-\lambda_1)^2$, i.e., it depends on the largest eigenvalue of $\mtT$. For both $\mtT = \mtC_N$ and $\mtT = \mtC_I$, $\lambda_1 = \gamma  = \left( 1/2 \right)^{1/2^{J-2}}$ due to rescaling.

\noindent \textbf{Lipschitz constant.}
The Lipschitz constant of a covariance diffusion wavelet $h_j(\lambda)$ is the upper bound of its absolute derivative, i.e., 
\begin{equation}
    P \geq |h'_j(\lambda)| = | 2^{j-1}\lambda^{2^{j-1}-1} - 2^{j}\lambda^{2^j-1} |
\end{equation}
for $\lambda \in [0,1]$. Since the powers of $\lambda$ in its domain are always smaller than 1, the largest value of the derivative is obtained for $\lambda=1$ and corresponds to $P = 2^j - 2^{j-1} = 2^{j-1}$.

\noindent \textbf{Maximum.}
We derive the value of $\lambda^*$ corresponding to the maximum of the wavelet with largest scale.
We consider the function:
\[
h_j(\lambda) = \lambda^{2^{j-1}} - \lambda^{2^j}
\]
defined on the interval \( \lambda \in [0,1] \).
Let us define: $a = 2^{j-1}, \quad b = 2^j = 2a$.
Then:
\[
h_j(\lambda) = \lambda^a - \lambda^{2a}.
\]
To find the maximum, since the diffusion wavelet is concave in its domain (cf. Figure~\ref{fig:motivation_example}), we take the derivative:
\[
h_j'(\lambda) = a \lambda^{a-1} - 2a \lambda^{2a - 1} = a \lambda^{a-1} (1 - 2\lambda^a)
\]
and set it equal to zero:
\[
h_j'(\lambda) = 0 \quad \Longleftrightarrow \quad 1 - 2\lambda^a = 0 \quad \Longleftrightarrow \quad x^\lambda = \frac{1}{2}.
\]
Solving for \( \lambda \):
\[
\lambda^* = \left( \frac{1}{2} \right)^{1/a} = \left( \frac{1}{2} \right)^{1/2^{j-1}}.
\]
This justifies the value that we pick for $\gamma $.

\subsection{Monic Cubic Polynomial Wavelets}
Monic cubic polynomial covariance wavelets provide an alternative spectral filtering mechanism that extends the family of covariance wavelets introduced above. Following the construction in \cite{hammond2011wavelets}, these wavelets are defined via a kernel $ h: \mathbb{R} \to \mathbb{R} $ parameterized by $\alpha, \beta, \bar{\lambda}_1,$ and $\bar{\lambda}_2$:

$$
h(\lambda; \alpha, \beta, \bar{\lambda}_1, \bar{\lambda}_2) =
\begin{cases}
\bar{\lambda}_1^{-\alpha} \lambda^{\alpha} & \text{if } \lambda < \bar{\lambda}_1, \\
s(\lambda) & \text{if } \bar{\lambda}_1 \leq \lambda \leq \bar{\lambda}_2, \\
\bar{\lambda}_2^{\beta} \lambda^{-\beta} & \text{if } \lambda > \bar{\lambda}_2,
\end{cases}
$$

\noindent where $h$ is normalized such that $h(\bar{\lambda}_1) = h(\bar{\lambda}_2) = 1$ and $s(\lambda)$ is a cubic polynomial that ensures smooth transitions and is uniquely specified by the conditions:

$$
s(\bar{\lambda}_1) = s(\bar{\lambda}_2) = 1, \quad s'(\bar{\lambda}_1) = \frac{\alpha}{\bar{\lambda}_1}, \quad s'(\bar{\lambda}_2) = -\frac{\beta}{\bar{\lambda}_2}.
$$

The parameters $\bar{\lambda}_1$ and $\bar{\lambda}_2$ are determined from the covariance eigenvalues by selecting the first and third quartiles of the spectrum:

$$
\bar{\lambda}_1 = \lambda_{\lfloor N/4 \rfloor}, \quad \bar{\lambda}_2 = \lambda_{\lceil 3N/4 \rceil},
$$
where $N$ is the number of eigenvalues.

To construct wavelets at multiple scales, a scale function \( t: \{1, \ldots, J\} \to \mathbb{R}_+ \) is used, where the scales \( \{t_j\} \) are logarithmically spaced between
\[
t_1 = \frac{\bar{\lambda}_2 K}{\lambda_1} \quad \text{and} \quad t_J = \frac{\bar{\lambda}_2}{\lambda_1},
\]
with \( K \) controlling the resolution and \( \lambda_1 = \gamma  \) by normalization.

Specifically, the scales are given by:
\[
t_j = t_J \left( \frac{t_1}{t_J} \right)^{\frac{J - j}{J - 1}} = \left( \frac{\bar{\lambda}_2}{\lambda_1} \right) K^{\frac{J - j}{J - 1}}, \quad j = 1, \ldots, J.
\]

Finally, the wavelet coefficients at scale $j$ are defined by applying the kernel $h$ to the scaled eigenvalues $t_j\lambda$, i.e., $h_j(\lambda) = h(t_j\lambda)$. The corresponding wavelet is given by

$$
\mtH_j = \mtV \, h(t_j\mathbf{\Lambda}) \, \mtV^\Tr
$$
where $h(t_j\mathbf{\Lambda})$ acts pointwise on the diagonal entries of $t_j\mathbf{\Lambda}$.
This construction enables flexible spectral filtering, with the parameters $\alpha, \beta,$ and $K$ controlling the shape and resolution of the wavelets across scales.
For monic wavelets, we set the coefficient $\gamma=1$ in the definition of $\mtT$.

\noindent \textbf{Frame bounds.}
From Lemma~\ref{lemma:cov_bounds} we have that the frame bounds depend on the function $G(\lambda)$. Since deriving an analytical expression for $G(\lambda)$ is cumbersome, we plot it in Figure~\ref{fig:g_lambda} (right) and analyze it qualitatively. 
For small $J$ (e.g., $J=3$), there is no wavelet covering the middle eigenvalues and therefore $A^2$ is small (around 0.01 in the plot), whereas $B^2$ is around 1.5. For larger $J$, both $A$ and $B$ increase.

\noindent \textbf{Lipschitz constant.}
The Lipschitz constant \( P \) can be approximated using an upper bound on the supremum of the absolute value of the kernel derivative. For a scaling value \( t_j \) corresponding to the \( j \)-th scale, we have:
\begin{align}
    P = \sup_{\lambda \in [0,1]} \left|h'_j(\lambda)\right| =  \sup_{\lambda \in [0,1]}\left|h'(t_j\lambda)\right|
\end{align}
\begin{align}
h'(t_j\lambda) =
\begin{cases}
\alpha t_j^{\alpha} \bar{\lambda}_1^{-\alpha}\lambda^{\alpha-1} & \text{if } \lambda < \bar{\lambda}_1, \\
s'(t_j\lambda) & \text{if } \bar{\lambda}_1 \leq \lambda \leq \bar{\lambda}_2, \\
-\beta t_j^{-\beta}\bar{\lambda}_2^{\beta} \lambda^{-\beta-1} & \text{if } \lambda > \bar{\lambda}_2,
\end{cases}
\end{align}
Since this kernel has a piecewise definition, the Lipschitz constant is importantly finite and, specifically, it is bounded by the maximum of the absolute value of the three parts:
\begin{align}
    P = \max \left\{\left|\frac{\alpha t_j^\alpha }{ \bar{\lambda}_1}\right|,\ |3at_j^3\bar{\lambda}_2| + |2bt_j\bar{\lambda}_2| + |c|,\ \left|\frac{\beta}{t_j^\beta \bar{\lambda}_2}\right|\right\}
\end{align}
where \( a, b \), and \( c \) are the coefficients of the monic cubic polynomial.

\subsection{Tight Hann Wavelets}

Tight Hann wavelets are constructed by adapting the classical Hann window function to the spectral domain of the covariance matrix. Given a covariance matrix $\mtC$, its wavelet operator $\mtT$ with eigenvalues bounded by $\lambda_1$, and a desired number of wavelet scales $J$, the wavelets are defined using a scaling parameter $R$ that controls the overlap between scales \cite{shuman2015spectrum}.
The spectral translations are given by

$$
t_j = \frac{j \, \lambda_1}{J + 1 - R}, \quad j = 1, \ldots, J,
$$
where each $t_j$ determines the center of the $j$-th wavelet kernel in the spectral domain. The Hann kernel at scale $j$ applied to eigenvalue $\lambda$ is defined as

$$
h_j(\lambda) = \frac{1}{2} + \frac{1}{2} \cos\left( 2\pi \frac{J + 1 - R }{R \lambda_1} (\lambda-t_j) + \pi \right),
$$
with the values truncated to zero outside the kernel support to ensure compact spectral localization.
If spectral warping is applied, commonly by taking the logarithm of eigenvalues, the kernels concentrate resolution in the low-frequency region of the spectrum, enhancing spectral adaptivity.

The wavelet filter matrices at each scale are constructed by multiplying the diagonal kernel matrices with the eigenvector matrix of the covariance matrix, i.e.,

$$
\mtH_j = \mtV \, h_j(\mathbf{\Lambda}) \, \mtV^\top.
$$

This framework yields wavelets with smooth spectral localization and tight frame properties, offering a principled method to define covariance wavelets complementary to polynomial and diffusion wavelet constructions.
For Hann wavelets, we set the parameter $\gamma=10$, which we observe to perform well in practice.

\noindent \textbf{Frame bounds.}
Hann wavelets are tight by design, i.e., they define a frame where $A^2=B^2 = 3R/8$~\cite[Corollary 1]{shuman2015spectrum}. This corresponds to a constant $G(\lambda)$ in Figure~\ref{fig:g_lambda} (where $R=3$).

\noindent \textbf{Lipschitz constant.}
For a scale j, the tight Hann wavelet is differentiable for \( \lambda \in [0,1] \) with its  derivative  given by:
\[
\frac{d h_j}{d\lambda} = -\pi \cdot \frac{J + 1 - R }{R \lambda_1} \cdot \sin\left(2\pi \cdot \frac{J + 1 - R }{R \lambda_1} (\lambda-t_j) + \frac{\pi}{2} \right).
\]
Since the sine function is bounded between 0 and 1, the Lipschitz constant $P$ is:
\begin{align}
    P = \sup_{\lambda \in [0,1]} \left| \frac{d h_j}{d\lambda} \right| = \pi \cdot \left| \frac{J + 1 - R }{R \lambda_{\max}} \right|.
\end{align}

\subsection{Spatial Localization of Covariance Wavelets}

A common property of wavelets is their localization in both space and frequency domains. This is evident in the time domain~\cite{mallat1999wavelet} and has also been extended on grids~\cite{bruna2013invariant} and graphs~\cite{hammond2011wavelets}, where locality is defined by the connectivity structure of the graph.
We show that covariance wavelets enjoy analogous properties. To this end, we first define a notion of locality for covariance matrices based on the distance between two data features $i$ and $j$. Inspired by graph distances, we define the distance between features $i$ and $j$ given a covariance operator $\mtT$ as follows.
\begin{definition}
Consider a covariance wavelet operator $\mtT$ and two nodes on the covariance graph $i,j$. The distance between nodes $i,j$ is defined as
\begin{equation}
    d_\mtT^s(i,j) = |[\mtT^s]_{ij}|^{-1},
\end{equation}    
where $s$ is a positive natural number.
\end{definition}
This distance can be interpreted by looking at the matrix $\mtT$ as defining a graph structure that connects the features of a sample $\vcx$ (cf. Figure~\ref{fig:cov_graph}). In that case, the $ij$-th entry of its powers $[\mtT^s]_{ij}$ contains the sum of the edge weights (i.e., covariance values) along the paths of length $s$ that connect nodes $i$ and $j$. Therefore, features with stronger covariances among each other have smaller distances.  

Let us now consider a wavelet centered at node $a$, i.e., $\vch_j^a = \mtH_j \vcdelta_a$ where $\vcdelta_a$ is a vector with all zeros except for the $a$-th element which is 1. We aim to show that $\vch_j^a$ becomes smaller in magnitude on nodes $b$ which are further away from $a$.

We start by considering diffusion wavelets. The $b$-th value of the diffusion wavelet centered at node $a$ is:
\begin{align}
    |\vcdelta_b^\Tr \vch_j^a | &= |\vcdelta_b^\Tr \mtH_j \vcdelta_a | = |\vcdelta_b^\Tr (\mtT^{2^{j-1}} - \mtT^{2^{j}}) \vcdelta_a| \\
    &= | [\mtT^{2^{j-1}}]_{a,b} - [\mtT^{2^{j}}]_{a,b} | \\
    & \leq d_\mtT^{2^{j-1}}(a,b)^{-1} + d_\mtT^{2^{j}}(a,b)^{-1}.
\end{align}
Therefore, as the distance between $a$ and $b$ increases, the corresponding wavelet coefficient decreases, which shows that diffusion wavelets are localized in the covariance space. 

We consider now a generic wavelet $h_j(\lambda)$ with a zero of integer multiplicity at $\lambda=0$ instantiated at different scales as $h_j(\lambda) = h(t_j\lambda)$ for $t_j\in \mathbb{R}$ (e.g., monic cubic wavelets). We follow a similar approach to~\cite[Theorem 5.7]{hammond2011wavelets} to prove that a wavelet centered at node $a$ vanishes on vertices $b$ sufficiently far from $a$ in the limit of small scales, i.e., $t_j \rightarrow 0$.
Our proof is based on the approximation of $h_j(\lambda)$ with a polynomial for $\lambda \approx 0$. We begin by analyzing the stability of wavelets to perturbations of the wavelet function (which occurs, e.g., in case of approximations).
\begin{lemma}\label{lemma:wavelet_pert_func}
    Let $\vch_j^a$ and $\vchh_j^a$ be the wavelets centered in feature $a$ generated by the functions $h(\lambda)$ and $\schh(\lambda)$, respectively. If $|h(t_j\lambda) - \schh(t_j\lambda)| \leq M(t)$ for all $\lambda \in [0,\gamma]$, then $| [\vch_j^a]_b - [\vchh_j^a]_b | \leq M(t)$ for every feature $b$.
\end{lemma}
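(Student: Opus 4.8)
The plan is to reduce the entrywise claim to a uniform spectral bound and then exploit the orthonormality of the eigenvector matrix $\mtV$. First I would write both wavelets explicitly through the eigendecomposition of the wavelet operator $\mtT = \mtV\mtLambda\mtV^\Tr$. Since $\vch_j^a = \mtH_j\vcdelta_a = \mtV h(t_j\mtLambda)\mtV^\Tr\vcdelta_a$ and likewise $\vchh_j^a = \mtV \schh(t_j\mtLambda)\mtV^\Tr\vcdelta_a$, the $b$-th entry of each is a sum over eigenpairs obtained by reading off row $b$ and row $a$ of $\mtV$:
\[
    [\vch_j^a]_b = \sum_{k=1}^N [\vcv_k]_a[\vcv_k]_b\, h(t_j\lambda_k), \qquad [\vchh_j^a]_b = \sum_{k=1}^N [\vcv_k]_a[\vcv_k]_b\, \schh(t_j\lambda_k).
\]

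Next I would subtract the two expressions and apply the triangle inequality, isolating the spectral gap factor inside the sum. Using the hypothesis $|h(t_j\lambda_k) - \schh(t_j\lambda_k)| \leq M(t)$, which holds for every eigenvalue $\lambda_k \in [0,\gamma]$, yields
\[
    |[\vch_j^a]_b - [\vchh_j^a]_b| \leq M(t)\sum_{k=1}^N |[\vcv_k]_a|\,|[\vcv_k]_b|.
\]
The remaining sum is controlled by Cauchy--Schwarz as $\sum_k |[\vcv_k]_a||[\vcv_k]_b| \leq (\sum_k [\vcv_k]_a^2)^{1/2}(\sum_k [\vcv_k]_b^2)^{1/2}$, and since $\mtV\mtV^\Tr = \mtI$ each coordinate collected across all eigenvectors satisfies $\sum_k [\vcv_k]_a^2 = [\mtV\mtV^\Tr]_{aa} = [\mtI]_{aa} = 1$, and likewise for $b$. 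Hence the sum is at most $1$ and the claimed bound $|[\vch_j^a]_b - [\vchh_j^a]_b| \leq M(t)$ follows for every feature $b$.

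The step requiring the most care is recognizing that the relevant geometric fact is the unit norm of the \emph{rows} of $\mtV$ rather than its columns, since the quantity $\sum_k [\vcv_k]_a^2$ aggregates the $a$-th entry across all eigenvectors and thus equals the $(a,a)$ diagonal of $\mtV\mtV^\Tr$; because $\mtV$ is orthogonal this equals the diagonal of $\mtV^\Tr\mtV$, so both viewpoints give unit norm, and the argument mirrors the localization estimate in~\cite[Theorem 5.7]{hammond2011wavelets}. I do not anticipate a genuine obstacle beyond this bookkeeping: the bound is uniform in the scale $t_j$ and in the perturbation, with the wavelet shape entering only through the assumed pointwise gap $M(t)$. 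This is precisely what makes the lemma the clean bridge used afterwards to transfer a polynomial approximation of $h_j$ near $\lambda = 0$ to the exact wavelet when establishing spatial localization.
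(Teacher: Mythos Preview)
Your argument is correct and essentially the same as the paper's: both write $[\vch_j^a]_b-[\vchh_j^a]_b=\vcdelta_b^\Tr\mtV\,\operatorname{diag}(h(t_j\lambda_k)-\schh(t_j\lambda_k))\,\mtV^\Tr\vcdelta_a$ and then exploit the orthonormality of $\mtV$ together with the uniform bound $M(t)$ on the diagonal. The only cosmetic difference is that the paper bounds this bilinear form in one stroke via norm submultiplicativity, $|\vcdelta_b^\Tr M\vcdelta_a|\leq\|\vcdelta_b\|\,\|\mtV\|^2\,\|\operatorname{diag}(\cdot)\|\,\|\vcdelta_a\|=M(t)$, whereas you expand the sum and reach the same conclusion through the triangle inequality and Cauchy--Schwarz on the rows of $\mtV$.
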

\begin{proof}
    From the definition of wavelet localized on a feature, we have that 
    \begin{align}
        | [\vch_j^a]_b - [\vchh_j^a]_b | = 
        |\vcdelta_b^\Tr (\mtH_j - \mthH_j) \vcdelta_a|  \\ 
        = |\vcdelta_b^\Tr (\mtV ( \operatorname{diag}(h(t_j\lambda_1), \dots, h(t_j\lambda_N)) - \nonumber \\
        \operatorname{diag}(\schh(t_j\lambda_1), \dots, \schh(t_j\lambda_N)) \mtV^\Tr) \vcdelta_a| \\
        \leq \| \vcdelta_b \| \| \mtV \| \| \mtV \|  \| \vcdelta_a \| \nonumber \\
        \| \operatorname{diag}(h(t_j\lambda_1) - \schh(t_j\lambda_1), \dots, h(t_j\lambda_N)) - \schh(t_j\lambda_N)) \|
    \end{align}
    where the last step holds from the submultiplicativity of the norm. Since $\| \vcdelta_b \| = \| \mtV \| = \| \vcdelta_a \|=1$, the bound is determined by the norm of the diagonal matrix, i.e., its maximum diagonal element which is bounded by $M(t)$ from the assumptions.
\end{proof}

We now bound the error of approximating $h(\lambda)$ with the first non-zero element of its Taylor polynomial $\schh(\lambda)$.
Let $h$ be differentiable $Z+1$ times and let $h^{(Z)}$ be the non-zero derivative of smallest order of $h(\lambda)$ at $\lambda=0$. We then have
\begin{equation}\label{eq:taylor_approx}
    \schh(t_j\lambda) = \frac{h^{(Z)}(0)}{Z!}(t_j\lambda)^Z.
\end{equation}
From~\cite[Lemma 5.6]{hammond2011wavelets}, we have that
\begin{equation}\label{eq:pert_func_bound}
    |h(t_j\lambda) - \schh(t_j\lambda)| \leq t_j^{Z+1} \frac{\lambda^{Z+1}_1}{(Z+1)!}B
\end{equation}
where $B \geq |h^{(Z+1)}(\lambda)|$ for $\lambda \in [0,t'\gamma]$ and $t'>0$.

With this in place, we provide our localization result. Since all wavelet coefficients decay to $0$ for $t_j=0$ as functions $h(\lambda)$ are band-pass filters, to provide a meaningful bound we consider the quantity $[\vch_j^a]_b/\| \vch_j^a \|$, i.e., the wavelet coefficient normalized by the norm of the wavelet vector. 

\begin{theorem}\label{th:localization}
    Consider a wavelet function $h(\lambda)$ that is differentiable $Z+1$ times with $h(0)=0$, $h^{(z)}(0)=0$ for $z < Z$ and $h^{(Z)}(0)>0$. Assume a constant $t' > 0$ such that $|h^{(Z+1)}(\lambda)| \leq B$ for all $\lambda \in [0,t'\gamma]$.
    Let $\vch_j^a$ be the wavelet produced by function $h(\lambda)$ at scale $t_j$ with covariance operator $\mtT$, centered on feature $a$.
    For a different feature $b$, there exist constants $D, C, t''$ such that
    \begin{equation}
        \frac{[\vch_j^a]_b}{\| \vch_j^a \|} \leq Dt_j + C d_\mtT^Z(a,b)^{-1}
    \end{equation}
    for $t_j \leq \min(t', t'')$.
\end{theorem}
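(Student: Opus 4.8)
The plan is to localise the wavelet by approximating the spectral function $h(t_j\lambda)$ near $\lambda=0$ with its leading Taylor monomial and then controlling separately the numerator $[\vch_j^a]_b$ and the normalising factor $\|\vch_j^a\|$ through this approximation. Concretely, I would take $\schh(t_j\lambda)=\frac{h^{(Z)}(0)}{Z!}(t_j\lambda)^Z$ as in~\eqref{eq:taylor_approx}, whose associated wavelet matrix collapses to a pure power, $\mthH_j=\mtV\,\schh(t_j\mtLambda)\,\mtV^\Tr=\frac{h^{(Z)}(0)}{Z!}t_j^Z\mtT^Z$. The centred approximate wavelet then has the exact entry $[\vchh_j^a]_b=\vcdelta_b^\Tr\mthH_j\vcdelta_a=\frac{h^{(Z)}(0)}{Z!}t_j^Z[\mtT^Z]_{ab}$, which already isolates the distance term, since $|[\mtT^Z]_{ab}|=d_\mtT^Z(a,b)^{-1}$.

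To bound the numerator, I would invoke Lemma~\ref{lemma:wavelet_pert_func} with the Taylor-remainder estimate~\eqref{eq:pert_func_bound}, giving the entrywise error $|[\vch_j^a]_b-[\vchh_j^a]_b|\le M(t_j)$ with $M(t_j)=t_j^{Z+1}\frac{\lambda_1^{Z+1}}{(Z+1)!}B$, valid for $t_j\le t'$. Combined with the exact value above this yields
\[
|[\vch_j^a]_b|\;\le\;\frac{h^{(Z)}(0)}{Z!}t_j^Z\,|[\mtT^Z]_{ab}|\;+\;t_j^{Z+1}\frac{\lambda_1^{Z+1}}{(Z+1)!}B .
\]

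The crucial and hardest step is the lower bound on the denominator $\|\vch_j^a\|$, which must not decay faster than $t_j^Z$ or the normalised coefficient would be vacuous. I would use the reverse triangle inequality $\|\vch_j^a\|\ge\|\vchh_j^a\|-\|\vch_j^a-\vchh_j^a\|$: the first term equals $\frac{h^{(Z)}(0)}{Z!}t_j^Z\|\mtT^Z\vcdelta_a\|$, while the second obeys $\|\vch_j^a-\vchh_j^a\|\le\|\mtH_j-\mthH_j\|\,\|\vcdelta_a\|\le M(t_j)$ in spectral norm. Since $M(t_j)$ scales as $t_j^{Z+1}$ against the leading $t_j^Z$, choosing $t''=\frac{(Z+1)h^{(Z)}(0)\|\mtT^Z\vcdelta_a\|}{2\lambda_1^{Z+1}B}$ forces $\|\vch_j^a\|\ge\tfrac12\frac{h^{(Z)}(0)}{Z!}t_j^Z\|\mtT^Z\vcdelta_a\|$ for $t_j\le t''$; here I rely on the hypotheses $h^{(Z)}(0)>0$ and $\mtT^Z\vcdelta_a\neq\mathbf{0}$.

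Finally I would divide the numerator bound by this denominator bound and split the fraction into its two summands. The term carrying $|[\mtT^Z]_{ab}|$ gives $C\,d_\mtT^Z(a,b)^{-1}$ with $C=2/\|\mtT^Z\vcdelta_a\|$, while the remainder term loses one power of $t_j$ against the denominator and gives $D\,t_j$ with $D=\frac{2\lambda_1^{Z+1}B}{(Z+1)h^{(Z)}(0)\|\mtT^Z\vcdelta_a\|}$, both constants being independent of $t_j$ and $b$, which is exactly the claimed form for $t_j\le\min(t',t'')$. The main obstacle is precisely the denominator estimate: one must verify that the leading monomial genuinely dominates at small scales and pin down the threshold $t''$ past which the higher-order remainder becomes negligible, as everything else is routine substitution once this domination is secured.
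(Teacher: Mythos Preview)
Your proof is correct and follows essentially the same route as the paper's: Taylor approximation of $h$ by its leading monomial, control of the numerator via Lemma~\ref{lemma:wavelet_pert_func} and~\eqref{eq:pert_func_bound}, and a reverse triangle inequality for the denominator, arriving at the identical constants $C=2/\|\mtT^Z\vcdelta_a\|$ and $D=2\lambda_1^{Z+1}B/\big((Z+1)h^{(Z)}(0)\|\mtT^Z\vcdelta_a\|\big)$. The only difference is that you bound $\|\vch_j^a-\vchh_j^a\|$ directly by the spectral norm $\|\mtH_j-\mthH_j\|\le M(t_j)$, whereas the paper routes through the entrywise bound of Lemma~\ref{lemma:wavelet_pert_func} together with $\|\cdot\|_2\le\sqrt{N}\|\cdot\|_\infty$ and picks up an extra factor $\sqrt{N}$, giving a smaller $t''$ than yours but the same final $C,D$.
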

\begin{proof}
    We begin by approximating $h(\lambda)$ with $\schh(\lambda)$ as in~\eqref{eq:taylor_approx}.
    The perturbed wavelet coefficient produced by $\schh(\lambda)$ is
    \begin{align}\label{eq:pert_coeff}
        [\vchh_j^a]_b = \frac{h^{(Z)}(0)}{Z!} t_j^Z \vcdelta_b^\Tr \mtT^Z \vcdelta_a = \frac{h^{(Z)}(0)t_j^Z}{(Z!) d_\mtT^Z(a,b)}.
    \end{align}
    We can then bound the wavelet coefficient as
    \begin{align}
        |[\vch_j^a]_b| &= |[\vch_j^a]_b - [\vchh_j^a]_b + [\vchh_j^a]_b| \\
        & \leq |[\vch_j^a]_b - [\vchh_j^a]_b | + | [\vchh_j^a]_b| \label{eq:second_line} \\
        & \leq  t_j^{Z+1} \frac{\lambda^{Z+1}_1}{(Z+1)!}B  + \frac{h^{(Z)}(0)t_j^Z}{(Z!) d_\mtT^Z(a,b)} \label{eq:numer_bound}
    \end{align}
    where the last step follows by substituting~\eqref{eq:pert_coeff} and~\eqref{eq:pert_func_bound} into~\eqref{eq:second_line}.
    We now consider the norm $\| \vch_j^a \|$. By inverse triangle inequality we can write:
    \begin{align}
        \| \vch_j^a \| &= \| \vchh_j^a + (\vch_j^a - \vchh_j^a) \| \\
        & \geq \| \vchh_j^a \| - \| \vch_j^a - \vchh_j^a \|.
    \end{align}
    We can compute 
    \begin{align}
        \| \vchh_j^a \| = \frac{h^{(Z)}(0)}{Z!} t_j^Z \| \mtT^Z \vcdelta_a \|
    \end{align}
    and, from~~\eqref{eq:pert_func_bound} and Lemma~\ref{lemma:wavelet_pert_func},
    \begin{align}
        \| \vch_j^a - \vchh_j^a \| & \leq \sqrt{N} \max_i | [\vch_j^a - \vchh_j^a]_i | \\
        & \leq \sqrt{N}t_j^{Z+1} \frac{\lambda^{Z+1}_1}{(Z+1)!}B. 
    \end{align}
    This leads to 
    \begin{align}\label{eq:denom_bound}
        \| \vch_j^a \| \geq t_j^Z \left( \frac{h^{(Z)}(0)}{Z!} \| \mtT^Z \vcdelta_a \| - \sqrt{N}t_j \frac{\lambda^{Z+1}_1}{(Z+1)!}B \right).
    \end{align}
    If the r.h.s. of~\eqref{eq:denom_bound} is positive, then we can invert both sides and get an upper bound for $[\vch_j^a]_b/\| \vch_j^a \|$.
    To simplify the notation, we define $\Theta = \sqrt{N}\frac{\lambda^{Z+1}_1}{(Z+1)!}B$ and $\Xi = \frac{h^{(Z)}(0)}{Z!} \| \mtT^Z \vcdelta_a \|$ from~\eqref{eq:denom_bound}, $\Gamma = \frac{\lambda^{Z+1}_1}{(Z+1)!}B$ and $\Psi = \frac{h^{(Z)}(0)}{(Z!)}$ from~\eqref{eq:numer_bound}. For $t_j < \Xi/(2\Theta) = t''$, we get
    \begin{align}
        \frac{[\vch_j^a]_b}{\| \vch_j^a \|} & \leq \frac{t_j \Gamma + \Psi d_\mtT^Z(a,b)^{-1}}{\Xi - t_j\Theta} \\
        & \leq \frac{t_j \Gamma + \Psi d_\mtT^Z(a,b)^{-1}}{\Xi - \Xi/2} = \frac{2(t_j \Gamma + \Psi d_\mtT^Z(a,b)^{-1})}{\Xi}.
    \end{align}
    Choosing $D = 2\Gamma/\Xi$ and $C = 2\Psi/\Xi$ completes the proof.
\end{proof}

Theorem~\ref{th:localization} shows that a covariance wavelet localized on a node $a$ decays on a node $b$ for small scales (i.e., $t_j \rightarrow 0$) according to two terms: one term decreases with rate $\mathcal{O}(t_j)$, whereas the other one decreases with the inverse of the distance between the two nodes $d_\mtT^Z(a,b)^{-1}$. This confirms the localization of covariance wavelets in the covariance space.

\begin{algorithm}[t]
\caption{CST Computation}
\label{alg:cst}
\begin{algorithmic}[1]
\Require Input signal $\vcx_0 \in \mathbb{R}^{N}$, aggregation function $U$, wavelet functions $\{\mtH_j(\cdot)\}_{j=0}^J$, wavelet operator matrix $\mtT$, threshold $\tau$, nonlinearity $\rho$
\Ensure Scattering features $\Phi$

\State Initialize cst\_tree $\gets [\vcx_0]$
\State $\Phi \gets [\,U\vcx_0\,]$

\For{$l = 1$ to $L-1$}
    \State cst\_tree\_next $\gets [\;]$
    \For{$\vcx$ in cst\_tree}
        \For{$j = 1$ to $J$}
       \State $\vcx_{(j)} \gets \rho \mtH_j(\mtT)\vcx$
        \If{$\|\vcx_{(j)}\|_F / \| \vcx \|_F > \tau$}

            \State cst\_tree\_next.append($\vcx_{(j)}$)
            \State $\Phi$.append($U\vcx_{(j)}$)
        \EndIf

        \EndFor
    \EndFor
    \State cst\_tree $\gets$ cst\_tree\_next
\EndFor
\State \Return $\Phi$

\end{algorithmic}
\end{algorithm}

\section{Details on CST Computation}

We provide details on the implementation of CSTs in Algorithm~\ref{alg:cst}. Specifically, given a signal $\vcx_0$, the wavelet operator matrix $\mtT$ and the CST parameters, we recursively apply the $j$-th wavelet transform on the representations at the previous layer, and we prune the branches whose relative energy is below the specified threshold $\tau$.

\section{Stability of CSTs to Signal Perturbations}

We extend the stability analysis of CSTs in the main body by considering their response to perturbations of the observed signals $\vcbx = \vcx + \vcdelta$ for a fixed covariance. This scenario occurs, for example, when applying a pre-computed CST on a large unlabeled dataset to a smaller test set, which might contain perturbations or outliers. 
We show the stability of CST to signal perturbations in the following theorem, which is analogous to~\cite{ioannidis2020pruned} for graph scattering transforms.

\begin{theorem}\label{th:stability_sig_pert}
    Consider a CST $\mtPhi(\cdot)$ with $L$ layers and $J$ scales and let $B$ be the largest frame bound among its wavelets. The norm of the CST output difference when computed on a clean and perturbed data sample $\vcx$ and $\vcbx = \vcx + \vcdelta$, respectively, is
    \begin{align}
    \|\mtPhi(\mtT,\vcx) - \mtPhi(\mtT,\vcbx)\| \leq B_U\|\vcdelta\|  \sqrt{ \sum_{\ell=0}^{L-1} F_\ell B^{2\ell} }. \nonumber
    \end{align}
where $F_\ell $ is the number of active scattering features at layer $\ell$ and $\|U\| \leq B_U$.
\end{theorem}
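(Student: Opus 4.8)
\textbf{Proof proposal for Theorem~\ref{th:stability_sig_pert}.}

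The plan is to bound the difference in CST outputs layer by layer, exploiting the frame bound $B$ to control how a signal perturbation propagates through successive wavelet applications, and then aggregating the contributions across all active scattering features via the concatenation structure of $\mtPhi$. First I would fix a single scattering path $(j_\ell,\dots,j_1)$ and define the per-path difference $\vcd_{(j_\ell,\dots,j_1)} = \vcx_{(j_\ell,\dots,j_1)} - \vcbx_{(j_\ell,\dots,j_1)}$, where the barred quantity is computed from $\vcbx = \vcx + \vcdelta$. The goal at this stage is to show $\|\vcd_{(j_\ell,\dots,j_1)}\| \leq B^\ell \|\vcdelta\|$, i.e.\ that the perturbation grows by at most a factor $B$ per layer.

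The key recursive step relies on two facts. The first is that the nonlinearity $\rho$ is nonexpansive (pointwise Lipschitz with constant $1$, as is standard for scattering nonlinearities such as the absolute value or ReLU), so that $\|\rho(\vcu) - \rho(\vcv)\| \leq \|\vcu - \vcv\|$. The second is that a single wavelet satisfies $\|\mtH_j(\mtT)\vcu\| \leq B\|\vcu\|$, which follows from the frame upper bound in Property~\ref{lemma:frame}: since $\|\mtH_j\vcu\|^2 \leq \sum_{j'}\|\mtH_{j'}\vcu\|^2 \leq B^2\|\vcu\|^2$. Combining these, for a fixed last scale $j_\ell$ I would write
\begin{align}
\|\vcd_{(j_\ell,\dots,j_1)}\| &= \|\rho(\mtH_{j_\ell}\vcx_{(j_{\ell-1},\dots,j_1)}) - \rho(\mtH_{j_\ell}\vcbx_{(j_{\ell-1},\dots,j_1)})\| \nonumber \\
&\leq \|\mtH_{j_\ell}(\vcx_{(j_{\ell-1},\dots,j_1)} - \vcbx_{(j_{\ell-1},\dots,j_1)})\| \nonumber \\
&\leq B\,\|\vcd_{(j_{\ell-1},\dots,j_1)}\|, \nonumber
\end{align}
where both $\mtH_{j_\ell}$ and $\rho$ act on the same operator $\mtT$ (the covariance is fixed here, unlike in Theorem~\ref{th:cst_stab}). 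Unrolling this recursion from the base case $\vcd_\emptyset = \vcdelta$ at layer $0$ gives $\|\vcd_{(j_\ell,\dots,j_1)}\| \leq B^\ell\|\vcdelta\|$.

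Finally I would assemble the full CST difference. Since $\mtPhi$ concatenates the coefficients $U\vcx_{(j_\ell,\dots,j_1)}$ over all active paths, and the squared norm of a concatenation is the sum of squared norms, I would bound
\begin{align}
\|\mtPhi(\mtT,\vcx) - \mtPhi(\mtT,\vcbx)\|^2 &= \sum_{\ell=0}^{L-1}\sum_{\text{active paths at }\ell} \|U\vcd_{(j_\ell,\dots,j_1)}\|^2 \nonumber \\
&\leq B_U^2 \sum_{\ell=0}^{L-1} F_\ell\, (B^\ell\|\vcdelta\|)^2, \nonumber
\end{align}
using $\|U\| \leq B_U$ and the fact that there are $F_\ell$ active features at layer $\ell$. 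Taking the square root yields the claimed bound $B_U\|\vcdelta\|\sqrt{\sum_{\ell=0}^{L-1} F_\ell B^{2\ell}}$. The main obstacle I anticipate is subtle rather than computational: because this bound compares two trees built on the \emph{same} operator $\mtT$, I must verify that pruning selects the identical set of active branches for both $\vcx$ and $\vcbx$ so that the path-by-path comparison is well-defined and the count $F_\ell$ is unambiguous — if a perturbation pushes a branch across the threshold $\tau$, the summation structure could mismatch, and one would need either an argument that this does not occur under the stated hypotheses or a treatment analogous to Proposition~\ref{lemma:prune_cond}.
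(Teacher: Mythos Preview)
Your proposal is correct and follows essentially the same argument as the paper: a recursive per-path bound $\|\vcx_{(j_\ell,\dots,j_1)} - \vcbx_{(j_\ell,\dots,j_1)}\| \leq B^\ell\|\vcdelta\|$ via nonexpansiveness of $\rho$ and the frame bound, followed by summing squared norms over the $F_\ell$ active paths at each layer. The pruning-consistency concern you raise is valid but is also left implicit in the paper's own proof, which simply ``ignores for now the pruning'' and then inserts $F_\ell$ without a signal-perturbation analogue of Proposition~\ref{lemma:prune_cond}.
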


Theorem~\ref{th:stability_sig_pert} establishes that CSTs are affected by signal perturbations proportionally to the size of the perturbation.
Compared to Theorem~\ref{th:cst_stab}, the bound in Theorem~\ref{th:stability_sig_pert} is tighter as it does not depend on the number of layers via $\ell^2$. This is due to the fact that the covariance perturbation affects the wavelets that are applied at each layer, causing the error to be amplified at deeper representations, whereas signal perturbations affect only the representation once.

\section{Proofs}

We report here the proofs and derivations of the theorems, lemmas and propositions in the main paper.

\subsection{Proof of Theorem~\ref{th:perm_equivariance}}

To prove the permutation equivariance of the CST, we begin by proving the permutation equivariance of a single scattering vector $\phi_{j_\ell j_{\ell-1}\dots j_1}(\vcx) = U [\rho (\mtH_{j_\ell} \rho (\mtH_{j_{\ell-1}}\dots \rho (\mtH_{j_1}\vcx)) ) ]$ where $U$ is a permutation equivariant function that preserves the dimension of the input $\vcx$ (e.g., identity matrix).
Consider a dataset $\mtX$ and its permuted version by the permutation matrix $\mtPi$, $\mtbX = \mtPi\mtX$, both zero-mean w.l.o.g..
The sample covariance of the second dataset is 
\begin{align}
    \mtbC = \mtbX \mtbX^\Tr / T &= (\mtPi\mtX)(\mtPi\mtX)^\Tr / T \\
    &= \mtPi (\mtX \mtX^\Tr/T) \mtPi ^\Tr \\
    &= \mtPi \mthC \mtPi ^\Tr,
\end{align}
i.e., it is a permuted version of the sample covariance of the original dataset $\mthC$.
Let us now consider the two wavelet operators $\mthC_N$ and $\mthC_I$ and their permuted versions $\mtbC_N$ and $\mtbC_I$. For $\mthC_N =  \gamma  \mthC / \schw_1$, we have that 
\begin{align}
    \mtbC_N &= \gamma  \mtbC / \schw_1 = \gamma  \mtPi \mthC \mtPi ^\Tr / \schw_1 \\
    &= \mtPi \gamma   \mthC  / \schw_1 \mtPi ^\Tr = \mtPi \mthC_N \mtPi ^\Tr.
\end{align}
Similarly, for $\mthC_I =  \gamma (\mtI-\mthC / w_1)$ we get
\begin{align}
    \mtbC_I &=  \gamma (\mtI-\mtbC / w_1) = \gamma (\mtI-\mtPi \mthC \mtPi ^\Tr / w_1) \\
    &= \mtPi \gamma (\mtI- \mthC  / w_1)\mtPi ^\Tr = \mtPi \mthC_I\mtPi ^\Tr
\end{align}
That is, both $\mtbC_N$ and $\mtbC_I$ are permuted versions of $\mthC_N$ and $\mthC_I$.
We proceed to prove permutation equivariance for a generic wavelet operator $\mthT$ that is either $\mthC_N$ or $\mthC_I$ (we denote its permuted version as $\mtbT$).

Each wavelet $\mtH_j$ can be written as a covariance filter $\mtH_j(\mthT) = \sum_{k=0}^K h_{jk}\mthT^k$ for an appropriate choice of coefficients $h_{jk}$ that depend on the wavelet design (cf. Property~\ref{lemma:cov_wl}).
Once the coefficients $h_{jk}$ are fixed, computing the wavelet on a permuted covariance results in
\begin{align}
    \mtH_j(\mtbT) & =  \sum_{k=0}^K h_{jk}\mtbT^k = \sum_{k=0}^K h_{jk}(\mtPi \mthT \mtPi ^\Tr)^k \\ & = \sum_{k=0}^K h_{jk}\mtPi \mthT^k \mtPi ^\Tr =  \mtPi \sum_{k=0}^K h_{jk} \mthT^k \mtPi ^\Tr \\
    & = \mtPi \mtH_j(\mthT) \mtPi ^\Tr 
\end{align}
where we used the fact that $\mtPi\mtPi^\Tr = \mtI$ since $\mtPi$ is a permutation matrix.
When this wavelet is computed on a permuted input $\mtPi\vcx$, we get
\begin{equation}
    \mtH_j(\mtbT) \mtPi\vcx = \mtPi \mtH_j(\mthT) \mtPi ^\Tr \mtPi\vcx = \mtPi \mtH_j(\mthT) \vcx,
\end{equation}
i.e., the wavelet output is permutation equivariant. 
Since the scattering vector $\phi_{j_\ell j_{\ell-1}\dots j_1}(\vcx)$ is a sequence of wavelets, elementwise nonlinearities $\rho$ (which do not affect feature ordering) and a permutation equivariant function $U$, we conclude that the scattering vector is permutation equivariant. 
By using this fact together with the CST permutation in Definition~\ref{def:cst_perm}, we conclude the proof for permutation equivariance. 
Furthermore, if $U$ is a permutation invariance function (e.g., average or sum), the CST becomes a composition of permutation equivariant and invariant function, resulting in a permutation invariant mapping. 
\qed

\subsection{Proof of Theorem~\ref{lemma:wavelet_stability}}

The proof of Theorem~\ref{lemma:wavelet_stability} follows closely from those of~\cite[Theorem 2]{sihag2022covariance} and~\cite[Theorem 1]{cavallo2024stvnn}, with some changes. We report it fully here for ease of exposition.

Consider a true covariance matrix $\mtC$ and its sample estimation $\mthC$ computed from $T$ observations, from which the wavelet operator $\mthT$ is computed. 
Let us evaluate the effect of the covariance estimation error on both choices of $\mtT$.
For $\mtT = \mtC_N$, we have that $\mthT - \mtT = \gamma  \mthC/\schw_1 - \gamma  \mtC/\scw_1 = \mtE_T$, whereas for $\mtT = \mtC_I$ we have $\mthT - \mtT = \gamma (\mtI - \mthC/\schw_1) - \gamma (\mtI - \mtC/\scw_1) = -\mtE_T$, i.e., the perturbation has the same magnitude in both cases and opposite sign. Since we are eventually interested in upper-bounding the error norm, we provide the following proof for a generic $\| \mtE_T \|$ which holds for both cases, regardless of the sign. 

The wavelet $\mtH_j(\mtT)$ can be instantiated by a polynomial covariance filter of order at most $K=N-1$ (cf. Property~\ref{lemma:cov_wl}).
This filter operates on powers of the operator $\mtT$, which we can analyze via Taylor expansion:
\begin{equation}
    \mthT^\sck = (\mtT + \mtE_T) ^\sck = \mtT^\sck + \sum_{r=0}^{k-1}\mtT^r\mtE_T\mtT^{k-r-1} + \mttE_T
\end{equation}
where $\mttE_T$ is such that $\|\mttE_T\| = \mathcal{O}(\|\mtE_T\|^2)$. For $T$ large enough, we have that $\|\mtE_T\| \ll 1$ and we can ignore $\mttE_T$ in the following.

Let $\mtT = \mtV\mtLambda\mtV^\Tr$ and $\mthT = \mthV\mathbf{\hat{\Lambda}}\mthV^\Tr$ be the eigendecompositions of the true and sample covariance wavelet operator matrix, respectively. 
Let $\mtH(\cdot)$ be a generic covariance filter that implements the wavelet at the scale of interest.
Substituting into the wavelet error the eigendecomposition of $\mtT$ and applying the respective graph Fourier transform on a signal $\vcx$, we get

\begin{align}
\mtH(\mthT)\vcx - \mtH(\mtT)\vcx = \\
\sum_{k=0}^K h_{k} \sum_{r=0}^{k-1}\mtT^r\mtE_T\mtT^{k-r-1}\vcx = \\
     \sum_{i=0}^{N-1}\sctx_{i} \sum_{k=0}^K h_{k} \sum_{r=0}^{k-1}\mtT^r\mtE_T\mtT^{k-r-1}\vcv_i = \\
     \sum_{i=0}^{N-1}\sctx_{i} \sum_{k=0}^K h_{k} \sum_{r=0}^{k-1}\mtT^r\sclambda_i^{k-r-1}\mtE_T\vcv_i \label{eq:evi}
\end{align}
where $\sctx_{i}$ is the $i$-th entry of $\vctx = \mtV^{\Tr}\vcx$. We now expand the last term as
\begin{equation}\label{eq_split}
    \mtE_T\vcv_i = \mtB_i\delta\vcv_i + \delta\lambda_i\vcv_i + (\delta\lambda_i\mtI_N-\mtE_T)\delta\vcv_i
\end{equation}
where
\begin{equation*}
    \mtB_i = \lambda_i\mtI_N-\mtT, \quad \delta\vcv_i = \vchv_i - \vcv_i, \quad \delta\lambda_i = \hat\lambda_i - \lambda_i.
\end{equation*}
Then, substituting \eqref{eq_split} into \eqref{eq:evi}, we get
\begin{align}
    \sum_{i=0}^{N-1}\sctx_{i} \sum_{k=0}^K h_{k} \sum_{r=0}^{k-1}\mtT^r\sclambda_i^{k-r-1}\mtB_i\delta\vcv_i  \label{eq:t1} \\
    +\sum_{i=0}^{N-1}\sctx_{i} \sum_{k=0}^K h_{k} \sum_{r=0}^{k-1}\mtT^r\sclambda_i^{k-r-1}\delta\lambda_i\vcv_i  \label{eq:t2} \\
    +\sum_{i=0}^{N-1}\sctx_{i} \sum_{k=0}^K h_{k} \sum_{r=0}^{k-1}\mtT^r\sclambda_i^{k-r-1}(\delta\lambda_i\mtI_N-\mtE_T)\delta\vcv_i. \label{eq:t3}
\end{align}
In the remaining part of the proof, we proceed with upper-bounding each term individually.

\subsubsection{First term \eqref{eq:t1}.}
We note that $\mtB_i = \lambda_i\mtI_N-\mtT = \mtV(\lambda_i\mtI_N-\mtLambda)\mtV^\Tr$. Plugging this into \eqref{eq:t1}, we get
\begin{align}\label{eq.t1expand}
\begin{split}
    \sum_{i=0}^{N-1}\sctx_{i} \sum_{k=0}^K h_{k} \sum_{r=0}^{k-1}\mtT^r\sclambda_i^{k-r-1}\mtV(\lambda_i\mtI_N-\mtLambda)\mtV^\Tr\delta\vcv_i  \\
    = \sum_{i=0}^{N-1}\sctx_{i} \sum_{k=0}^K h_{k} \sum_{r=0}^{k-1}\sclambda_i^{k-r-1}\mtV\mtLambda^r(\lambda_i\mtI_N-\mtLambda)\mtV^\Tr\delta\vcv_i \\
    = \sum_{i=0}^{N-1}\sctx_{i} \mtV\mtL_{i}\mtV^\Tr(\vchv_i-\vcv_i)
\end{split}
\end{align}
where $\mtL_{i}$ is a diagonal matrix whose $j$-th diagonal element is 0 if $i=j$ and for if $i\neq j$ it is
\begin{align}
\begin{split}
    \mtL_{i,j} &= \sum_{k=0}^K h_{k} \sum_{r=0}^{k-1}\sclambda_i^{k-r-1}\sclambda_j^r(\lambda_i-\lambda_j)
    \\ &= 
    \sum_{k=0}^K h_{k} \frac{\sclambda_i^k-\sclambda_j^k}{\lambda_i-\lambda_j}(\lambda_i-\lambda_j) \\
    &= \sum_{k=0}^K h_{k}\sclambda_i^k - \sum_{k=0}^K h_{k}\sclambda_j^k = h(\lambda_i) - h(\lambda_j).
\end{split}
\end{align}

Therefore, 
\begin{align}
    [\mtL_{i}\mtV^\Tr(\vchv_i-\vcv_i)]_j=\begin{cases}
			0, & \text{if $i=j$}\\
                (h(\lambda_i) - h(\lambda_j))\vcv_j^\Tr\vchv_i, & \text{if $i \neq j$}
		 \end{cases}
\end{align}
Taking the norm of \eqref{eq.t1expand}, we get 
\begin{align}
    \left\|\sum_{i=0}^{N-1}\sctx_{i} \mtV\mtL_{i}\mtV^\Tr(\vchv_i-\vcv_i)\right\| \le \\ \sum_{i=0}^{N-1}|\sctx_{i}| \|\mtV\|~\|\mtL_{i}\mtV^\Tr(\vchv_i-\vcv_i)\| \leq \\
    \sqrt{N}\sum_{i=0}^{N-1}|\sctx_{i}|\max_j|h(\lambda_i) - h(\lambda_j)||\vcv_j^\Tr\vchv_i| 
\end{align}
where the first inequality derives from the triangle and Cauchy-Schwarz inequalities and the second one holds from $\|\mtV\| = 1$ alongside $\|\vcy\|\leq\sqrt{N}\max_{i=1}y_i$ for an arbitrary vector $\vcy \in \mathbb{R}^N$.
We now leverage the result from \cite[Theorem~4.1]{loukas2017howclose} to characterize the dot product of the eigenvectors of true and sample covariance matrices under As.~\ref{as_eig_diff} as
\begin{align}\label{eq.event1}
    \mathbb{P}(|\vcv_j^\Tr\vchv_i|\geq B) \leq \frac{1}{T}\left(\frac{2k_j}{B|\lambda_i-\lambda_j|}\right)^2
\end{align}
where $k_j=\left( \mathbb{E}[||\vcx\vcx^\Tr\vcv_j||^2_2]-\lambda_j^2 \right)^{1/2}$ is related to the kurtosis of the data distribution~\cite{loukas2017howclose,sihag2022covariance}.
By setting 
\begin{align}
    B = \frac{2k_je^{\epsilon/2}}{T^{1/2}|\lambda_i-\lambda_j|},
\end{align}
we get
\begin{align}
    \max_j|h(\lambda_i) - h(\lambda_j)||\vcv_j^\Tr\vchv_i| \leq \\ \max_j\frac{|h(\lambda_i) - h(\lambda_j)|}{|\lambda_i-\lambda_j|}\frac{2k_je^{\epsilon/2}}{T^{1/2}}
\end{align}
with probability at least $1-e^{-\epsilon}$.

All wavelet functions $h(\lambda)$ are Lipschitz with constant $P$. 
Therefore, the term in \eqref{eq:t1} is bounded as
\begin{align}
\begin{split}
\label{eq:first_term}
    \left\|\sum_{i=0}^{N-1}\sctx_{i} \sum_{k=0}^K h_{k} \sum_{r=0}^{k-1}\mtC^r\sclambda_i^{k-r-1}\beta_i\delta\vcv_i\right\| \leq \\
    \sum_{i=0}^{N-1}|\sctx_{i}| \frac{2P\sqrt{N} k_{\text{max}}e^{\epsilon/2}}{T^{1/2}} \leq \frac{2}{T^{1/2}}Pk_{\text{max}}e^{\epsilon/2}N \|\vcx\|
\end{split}
\end{align}
with probability at least $1-e^{-\epsilon}$. Note that we leveraged $\sum_{i=0}^{N-1} |\tilde{x}_{i}| \leq \sqrt{N}\|\vcx\|$, and defined $k_{\text{max}} := \max_jk_j$.

\medskip
\noindent\textbf{Second term \eqref{eq:t2}.}
We rewrite \eqref{eq:t2} as
\begin{align}
    \sum_{i=0}^{N-1}\sctx_{i} \sum_{k=0}^K h_{k} \sum_{r=0}^{k-1}\mtT^r\sclambda_i^{k-r-1}\delta\lambda_i\vcv_i = \\
    \sum_{i=0}^{N-1}\sctx_{i} \sum_{k=0}^K h_{k} \sum_{r=0}^{k-1}\sclambda_i^r\sclambda_i^{k-r-1}\delta\lambda_i\vcv_i = \\
     \sum_{i=0}^{N-1}\sctx_{i} \sum_{k=0}^K kh_{k}\sclambda_i^{k-1}\delta\lambda_i\vcv_i = \\
    \sum_{i=0}^{N-1}\sctx_{i} h'(\lambda_i)\delta\lambda_i\vcv_i.
\end{align}
where $h'(\lambda)$ is the derivative of $h(\lambda)$ w.r.t. $\lambda$.
Taking the norm and applying standard inequalities, we get 
\begin{equation}
    \left\|\sum_{i=0}^{N-1}\sctx_{i} h'(\lambda_i)\delta\lambda_i\vcv_i\right\| \le \sum_{i=0}^{N-1}|\sctx_{i}|~ |h'(\lambda_i)|~|\delta\lambda_i|~\|\vcv_i\|.
\end{equation}
Here, we have that $\|\vcv_i\| = 1$, and that derivative of the wavelet $h'(\lambda)$ is bounded by $P$ from the Lipschitz property. We now proceed with bounding $|\delta\lambda_i|$. From Weyl's theorem~\cite[Theorem 8.1.6]{golub13}, we note that $\|\mtE_T\|\leq \alpha$ implies that $|\delta\lambda_i|\leq \alpha$ for any $\alpha > 0$ since $\mtT$ is a positive semi-definite matrix.
Then, we observe that 
\begin{align}
\| \mtE_T \| & = \| \gamma ( \mthC/\schw_1 - \mtC/\scw_1 ) \|  \\
& = \gamma  \| \mthC/\schw_1 - \mtC/\scw_1  \|  \\
& = \gamma  \| \mthC/\schw_1 - \mtC/\scw_1 - \mthC/\scw_1 + \mthC/\scw_1  \|  \\
& \leq \gamma  (\| \mthC/\scw_1 - \mtC/\scw_1 \| + \| \mthC/\schw_1 - \mthC/\scw_1  \| )  \\
& = \gamma  \left(\frac{1}{w_1} \| \mthC - \mtC \| + \|\mthC\| \| 1/\schw_1  - 1/\scw_1  \| \right)  \\
& = \gamma  \left(\frac{1}{w_1} \| \mthC - \mtC \| + \schw_1 \| 1/\schw_1  - 1/\scw_1  \| \right)  \\
& = \gamma  \left(\frac{1}{w_1} \| \mthC - \mtC \| + |\scw_1 - \schw_1| / \scw_1 \right),
\end{align}
Where we used the fact that $\|\mthC\| = \schw_1$ by definition of spectral norm.
We can again apply Weyl's theorem to note that if $|\scw_1 - \schw_1| \leq \beta$ then $\|\mthC - \mtC\| = \| \mtE_C\| \leq \beta$ and, therefore, 
\begin{align}
    \| \mtE_T \| \leq \frac{2\gamma }{w_1} \| \mtE_C \|.
\end{align}

Next,  $\| \mtE_C \|$ can be bounded via the result from~\cite[Theorem 5.6.1]{vershynin2018high}: 
\begin{align}\label{eq.event2}
    \footnotesize
    \mathbb{P}\bigg(\|\mtE_C\| \leq \underbrace{Q\left( \sqrt{\frac{G^2N(\log N+u)}{T}}+\frac{G^2N(\log N+u)}{T} \right)\|\mtC\|}_{\alpha} \bigg) \\  \geq 1-2e^{-u}.
\end{align}
where $Q$ is an absolute constant and $G\geq 1$ derives from Assumption~\ref{as_norm}. Finally, $|\tilde{x}_{i}|$ is handled via inequality $\sum_{i=0}^{N-1} |\tilde{x}| \leq \sqrt{N}\|\vcx\|$. Putting all these together, we upper-bound the second term \eqref{eq:t2} by
\begin{align}
\label{eq:second_term}
\begin{split}
    \left\|\sum_{i=0}^{N-1}\sctx_{i} \sum_{k=0}^K h_{k} \sum_{r=0}^{k-1}\mtT^r\sclambda_i^{k-r-1}\delta\lambda_i\vcv_i\right\| \leq \\
    \frac{P\sqrt{N}Q2\gamma }{w_1} \left( 
    \sqrt{\frac{G^2N(\log N+u)}{T}} + 
\right. \\ 
\left. 
    \frac{G^2N(\log N+u)}{T} 
\right) \|\mtC\| \| \vcx \|
\end{split}
\end{align}
with probability at least $1-2e^{-u}$.

\smallskip
\noindent\textbf{Third term \eqref{eq:t3}.} By leveraging equations (65)-(68) in \cite{sihag2022covariance}, we can show that $\|(\delta\lambda_i\mtI_N-\mtE_T)\delta\vcv_i\|$ scales as $\mathcal{O}(1/T)$ and is therefore negligible compared to the other terms scaling as $\mathcal{O}(1/\sqrt{T})$.

\smallskip
Bringing together \eqref{eq:first_term}, \eqref{eq:second_term} and the observation that \eqref{eq:t3} scales as $\mathcal{O}(1/T)$ leads to the bound in Theorem~\ref{lemma:wavelet_stability}, where we do not report the term $\| \vcx \|$ as per the definition of wavelet stability in~\eqref{eq:stab_def}.
The bound holds given that events \eqref{eq.event1} and \eqref{eq.event2} are independent.
Although their independence might not hold in practice, the factors involved in the bound remain the same, which allows for an analysis of how different design choices and data characteristics affect the CST stability. This consideration is analogous to~\cite[Remark 2]{cavallo2024stvnn}. 
\qed

\subsection{Proof of Proposition~\ref{lemma:prune_cond}}

We provide a condition such that the pruned tree of the CST computed on the true and estimated covariance have the same branches.
Let $\vcx_{(j_\ell,\dots,j_1)}$ be the scattering vector at layer $\ell$  for an input sample $\vcx$ to which the wavelets at scales $j_1, \dots, j_\ell$ have been applied subsequently, and let $\vchx_{(j_\ell,\dots,j_1)}$ be the same scattering vector with wavelets computed on the estimated covariance $\mthC$.
We begin by stating and proving the following lemma, which extends~\cite[Lemma 4]{ioannidis2020efficient} to the CST.
\begin{lemma}\label{lemma:vector_pert}
    Consider the scattering features $\vcx_{(j_\ell,\dots,j_1)}$ and $\vchx_{(j_\ell,\dots,j_1)}$ computed from true and perturbed covariance operators, respectively, at the $\ell$-th layer of a CST $\mtPhi(\mtT, \vcx)$. Let $B$ and $\Delta$ be the largest frame and stability bounds of the wavelets in the CST and $P$ the maximum Lipschitz constant. We have that 
    \begin{equation}\label{eq:vector_pert}
    \|\vchx_{(j_\ell,\dots,j_1)} - \vcx_{(j_\ell,\dots,j_1)}\| \leq \ell \Delta B^{\ell-1}\|\vcx\|.
\end{equation}
\end{lemma}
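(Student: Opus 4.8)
The plan is to prove the bound by induction on the layer index $\ell$, peeling off one wavelet at a time from the recursion $\vcx_{(j_\ell,\dots,j_1)} = \rho(\mtH_{j_\ell}(\mtT)\vcx_{(j_{\ell-1},\dots,j_1)})$ and its perturbed counterpart with $\mthT$. Two ingredients are used throughout. First, the nonlinearity $\rho$ is pointwise and $1$-Lipschitz with $\rho(0)=0$ (e.g.\ absolute value or ReLU), hence non-expansive, so that applying $\rho$ to two vectors never increases their distance and $\|\rho(\cdot)\|$ never exceeds the norm of its argument. Second, the frame Property~\ref{lemma:frame} implies that each wavelet has operator norm at most the frame bound, $\|\mtH_j\|\leq B$, since $\|\mtH_j\vcx\|^2\leq\sum_{j}\|\mtH_j\vcx\|^2\leq B^2\|\vcx\|^2$.

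For the base case $\ell=1$, non-expansiveness of $\rho$ together with the single-wavelet stability bound $\Delta$ from Theorem~\ref{lemma:wavelet_stability} gives $\|\vchx_{(j_1)}-\vcx_{(j_1)}\|\leq\|\mtH_{j_1}(\mthT)\vcx-\mtH_{j_1}(\mtT)\vcx\|\leq\Delta\|\vcx\|=1\cdot\Delta B^{0}\|\vcx\|$, as required. The core of the argument is the inductive step. Writing $\vcx_{\ell-1}:=\vcx_{(j_{\ell-1},\dots,j_1)}$ and $\vchx_{\ell-1}$ for its perturbed version, non-expansiveness of $\rho$ first yields
\begin{align*}
\|\vchx_{(j_\ell,\dots,j_1)}-\vcx_{(j_\ell,\dots,j_1)}\| \leq \|\mtH_{j_\ell}(\mthT)\vchx_{\ell-1}-\mtH_{j_\ell}(\mtT)\vcx_{\ell-1}\|.
\end{align*}
I would then insert the cross term $\mtH_{j_\ell}(\mthT)\vcx_{\ell-1}$ and split by the triangle inequality into a \emph{propagated-error} term $\|\mtH_{j_\ell}(\mthT)(\vchx_{\ell-1}-\vcx_{\ell-1})\|$ and a \emph{newly-introduced-error} term $\|(\mtH_{j_\ell}(\mthT)-\mtH_{j_\ell}(\mtT))\vcx_{\ell-1}\|$. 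The first is bounded by $B\,\|\vchx_{\ell-1}-\vcx_{\ell-1}\|\leq B(\ell-1)\Delta B^{\ell-2}\|\vcx\|$ using $\|\mtH_{j_\ell}\|\leq B$ and the inductive hypothesis; the second is bounded by $\Delta\|\vcx_{\ell-1}\|$ using the wavelet stability bound $\Delta$. A short auxiliary estimate shows $\|\vcx_{\ell-1}\|\leq B^{\ell-1}\|\vcx\|$ by applying non-expansiveness of $\rho$ and $\|\mtH_j\|\leq B$ once at each of the $\ell-1$ preceding layers. Adding the two contributions gives $(\ell-1)\Delta B^{\ell-1}\|\vcx\|+\Delta B^{\ell-1}\|\vcx\|=\ell\Delta B^{\ell-1}\|\vcx\|$, closing the induction.

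The argument is essentially routine once the decomposition is in place; the only point requiring care is the add-and-subtract step that cleanly separates the error accumulated at earlier layers, which is amplified by the operator norm $B$ of the current wavelet, from the fresh perturbation introduced by replacing $\mtT$ with $\mthT$ in the $\ell$-th wavelet, which is controlled by $\Delta$ and the magnitude $\|\vcx_{\ell-1}\|\leq B^{\ell-1}\|\vcx\|$ of the incoming feature. This is exactly what produces the linear-in-$\ell$ factor: each of the $\ell$ layers contributes one unit of fresh $\Delta$-error, each carried forward with the same geometric factor $B^{\ell-1}$.
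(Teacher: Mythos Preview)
Your proof is correct and follows essentially the same approach as the paper: an add-and-subtract cross term at the outermost layer, the triangle inequality to separate a propagated error (bounded via $\|\mtH_j\|\leq B$ and the previous-layer bound) from a fresh error (bounded via $\Delta$ and $\|\cdot\|\leq B^{\ell-1}\|\vcx\|$), yielding the recursion $e_\ell\leq B\,e_{\ell-1}+\Delta B^{\ell-1}\|\vcx\|$ that solves to $\ell\Delta B^{\ell-1}\|\vcx\|$. The only cosmetic difference is that you insert the cross term $\mtH_{j_\ell}(\mthT)\vcx_{\ell-1}$ (perturbed operator on true feature) whereas the paper inserts $\mtH_{j_\ell}(\mtT)\vchx_{\ell-1}$ (true operator on perturbed feature); the two choices are symmetric and lead to the same bound.
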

\begin{proof}
    We begin by expanding $\vchx_{(j_\ell,\dots,j_1)}, \vcx_{(j_\ell,\dots,j_1)}$ via their definition, adding and subtracting within the norm on the left-hand side the term $\rho(\mtH_{j_\ell}(\mtT)\rho(\mtH_{j_{\ell-1}}(\mthT) \dots  (\mtH_{j_{1}}(\mthT)\vcx )\dots ))$ and using triangle inequality:
    \begin{align}
        \|\vchx_{(j_\ell,\dots,j_1)} - \vcx_{(j_\ell,\dots,j_1)}\| \leq \\
        \| \rho(\mtH_{j_\ell}(\mthT)\rho(\mtH_{j_{\ell-1}}(\mthT) \dots  (\mtH_{j_{1}}(\mthT)\vcx) \dots ) \\ 
        -\rho(\mtH_{j_\ell}(\mtT)\rho(\mtH_{j_{\ell-1}}(\mtT) \dots  (\mtH_{j_{1}}(\mtT)\vcx)\dots )) \\
        +\rho(\mtH_{j_\ell}(\mtT)\rho(\mtH_{j_{\ell-1}}(\mthT) \dots  (\mtH_{j_{1}}(\mthT)\vcx)\dots )) \\
        -\rho(\mtH_{j_\ell}(\mtT)\rho(\mtH_{j_{\ell-1}}(\mthT) \dots  (\mtH_{j_{1}}(\mthT)\vcx)\dots )) \| \leq \\ 
        \| \rho(\mtH_{j_\ell}(\mtT) ( \rho(\mtH_{j_{\ell-1}}(\mtT) \dots  (\mtH_{j_{1}}(\mtT)\vcx)\dots )) \label{eq:first_sum1}  \\  
        - \rho(\mtH_{j_\ell}(\mtT)\rho(\mtH_{j_{\ell-1}}(\mthT) \dots  (\mtH_{j_{1}}(\mthT)\vcx)\dots ))  \| \label{eq:first_sum2} \\
        + \| \rho(\mtH_{j_\ell}(\mtT) ( \rho(\mtH_{j_{\ell-1}}(\mthT) \dots  (\mtH_{j_{1}}(\mthT)\vcx)\dots ))  \label{eq:second_sum1} \\ 
        - \rho(\mtH_{j_\ell}(\mthT)\rho(\mtH_{j_{\ell-1}}(\mthT) \dots  (\mtH_{j_{1}}(\mthT)\vcx)\dots ))  \|. \label{eq:second_sum2}
    \end{align}

    For the term in~\eqref{eq:first_sum1}-\eqref{eq:first_sum2}, since the nonlinearity is non-expansive (i.e., $\|\rho(\cdot)\|\leq 1$) and the norm is submultiplicative, it holds that 
    \begin{align}
        \| \rho(\mtH_{j_\ell}(\mtT) ( \rho(\mtH_{j_{\ell-1}}(\mtT) \dots  (\mtH_{j_{1}}(\mtT)\vcx)\dots ))  \\  
        - \rho(\mtH_{j_\ell}(\mtT)\rho(\mtH_{j_{\ell-1}}(\mthT) \dots  (\mtH_{j_{1}}(\mthT)\vcx)\dots )) \| \leq \\
        \|\rho(\cdot)\| \| \mtH_{j_\ell}(\mtT)  \rho(\mtH_{j_{\ell-1}}(\mtT) \dots  (\mtH_{j_{1}}(\mtT)\vcx)\dots )  \\  
        - \mtH_{j_\ell}(\mtT)\rho(\mtH_{j_{\ell-1}}(\mthT) \dots  (\mtH_{j_{1}}(\mthT)\vcx)\dots ) \| \leq \\
        \| \mtH_{j_\ell}(\mtT) \| \|  \rho(\mtH_{j_{\ell-1}}(\mtT) \dots  (\mtH_{j_{1}}(\mtT)\vcx)\dots )  \\
        - \rho(\mtH_{j_{\ell-1}}(\mthT) \dots  (\mtH_{j_{1}}(\mthT)\vcx)\dots ) \| = \\
        \| \mtH_{j_\ell}(\mtT) \| \| \vcx_{(j_{\ell-1},\dots,j_1)} - \vchx_{(j_{\ell-1}l,\dots,j_1)} \| \leq \\
        B \| \vcx_{(j_{\ell-1},\dots,j_1)} - \vchx_{(j_{\ell-1}l,\dots,j_1)} \|\label{eq:last_term1}
    \end{align}
    where the last step holds by applying the frame bound.
    For the term in~\eqref{eq:second_sum1}-\eqref{eq:second_sum2}, instead, we get
    \begin{align}
        \| \rho(\mtH_{j_\ell}(\mtT) ( \rho(\mtH_{j_{\ell-1}}(\mthT) \dots  (\mtH_{j_{1}}(\mthT)\vcx)\dots ))  \\ 
        - \rho(\mtH_{j_\ell}(\mthT)\rho(\mtH_{j_{\ell-1}}(\mthT) \dots  (\mtH_{j_{1}}(\mthT)\vcx)\dots ))  \| = \\
        \| \rho(\mtH_{j_\ell}(\mtT) \vchx_{(j_{\ell-1},\dots,j_1)})  
        - \rho(\mtH_{j_\ell}(\mthT)\vchx_{(j_{\ell-1},\dots,j_1)})  \| \leq \\
        \| \mtH_{j_\ell}(\mtT) - \mtH_{j_\ell}(\mthT) \|\| \vchx_{(j_{\ell-1},\dots,j_1)}\| \label{eq:last_term2}
    \end{align}
    where $\| \mtH_{j_\ell}(\mtT) - \mtH_{j_\ell}(\mthT) \| \leq \Delta$ from Theorem~\ref{lemma:wavelet_stability}.
    We can also bound $\| \vchx_{(j_{\ell-1},\dots,j_1)}\|$ as follows:
    \begin{align}
        \| \vchx_{(j_{\ell-1},\dots,j_1)}\| \leq \\
        \| \rho(\mtH_{j_{\ell-1}}(\mthT) \dots  (\mtH_{j_{1}}(\mthT)\vcx)\dots ) \| \leq \\
        \| \rho(\cdot) \| \| \mtH_{j_{\ell-1}}(\mthT) \| \dots \| \mtH_{j_{1}}(\mthT) \| \| \vcx \| \leq \\
        B^{\ell-1}\| \vcx \| \label{eq:last_term3}
    \end{align}
    using again the non-expansiveness of the nonlinearity and the submultiplicativity of the norm.
    
    Plugging~\eqref{eq:last_term1},\eqref{eq:last_term2} and \eqref{eq:last_term3} into \eqref{eq:first_sum1}-\eqref{eq:second_sum2} leads to the recursive inequality
    \begin{align}
        \|\vchx_{(j_\ell,\dots,j_1)} - \vcx_{(j_\ell,\dots,j_1)}\| \leq \\
        B \| \vcx_{(j_{\ell-1},\dots,j_1)} - \vchx_{(j_{\ell-1}l,\dots,j_1)}\| + \Delta B^{\ell-1}\| \vcx \|
    \end{align}
    which can be solved to obtain~\eqref{eq:vector_pert}.
\end{proof}

We now proceed with computing a sufficient condition for the pruning trees to be indentical under covariance perturbations.
Following the derivations in~\cite[(56)-(63)]{ioannidis2020efficient}, we get that a sufficient condition for the true and perturbed scattering tree to be identical is analogous to~\cite[(17)]{ioannidis2020efficient}:
\begin{align}\label{eq:cond1}
    | \| \mtH_j(\mtT)\vcx_{(j_\ell,\dots,j_1)} \|^2 - \tau \| \vcx_{(j_\ell,\dots,j_1)} \|^2 | > \nonumber \\ ((\ell+1)\Delta B^{\ell} \|\vcx\|)^2 + \tau\|\vchx_{(j_\ell,\dots,j_1)} - \vcx_{(j_\ell,\dots,j_1)}\|^2.
\end{align}
From~\eqref{eq:vector_pert}, we get that~\eqref{eq:cond1} is satisfied if
\begin{align}
    |\| \mtH_j(\mtC)\vcx_{(j_\ell,\dots,j_1)} \|^2 - \tau \| \vcx_{(j_\ell,\dots,j_1)} \|^2 | > \nonumber \\ 
    ((\ell+1)\Delta B^{\ell} \|\vcx\|)^2 + \tau (\ell \Delta B^{\ell-1}\|\vcx\|)^2 = \\
    (\Delta B^{\ell-1} \|\vcx\|)^2 ( (\ell+1)B + \ell\tau ). \nonumber
\end{align}
\qed

\subsection{Proof of Theorem~\ref{th:cst_stab}}

We now prove the stability of the CST to covariance estimation errors, i.e., we upper-bound $\| \mtPhi(\mtT,\vcx) - \mtPhi(\mthT,\vcx) \|$.
From the definition of CST and spectral norm, assuming no CST features are pruned, we have
\begin{align}
    \|\mtPhi(\mtT,\vcx) - \mtPhi(\mthT,\vcx)\| = \\
    \sqrt{\| U\vcx - U\vcx \|^2 + \| U\vcx_{(1)} - U\vchx_{(1)}\|^2 + \dots}  \\ 
    \overline{+ 
    \|  U\vcx_{(J,\dots,J)} - U\vchx_{(J,\dots,J)}\|^2} \leq \\
    \|U\|\sqrt{\sum_{\ell,j_\ell,\dots,j_1} \| \vcx_{(j_\ell,\dots,j_1)} - \vchx_{(j_\ell,\dots,j_1)} \|^2 }. \label{eq:norm1}
\end{align}

The first term in the sum is zero, whereas all other terms can be bounded using Lemma~\ref{lemma:vector_pert}.
This bounds each single term in the sum in~\eqref{eq:norm1}. However, some of these terms can be pruned by the CST. Let $F_\ell $ be the number of selected features at the $\ell$-th CST layer after pruning. 
By plugging this in~\eqref{eq:norm1} and defining $\|U\| \leq B_U$, we get the final bound:
\begin{align}
    \|\mtPhi(\mtC,\vcx) - \mtPhi(\mthC,\vcx)\| \leq B_U\Delta\|\vcx\|\sqrt{\sum_{\ell=1}^{L-1} \ell^2 B^{2\ell-2}F_\ell }.
\end{align}
\qed

\subsection{Proof of Theorem~\ref{th:stability_sig_pert}}

We now provide a bound for the CST under signal perturbation, i.e., we bound the quantity $\|\mtPhi(\mtT,\vcx) - \mtPhi(\mtT,\vcbx)\|$ where $\vcbx = \vcx + \vcdelta$.
Let $\vcbx_{(j_\ell,\dots,j_1)}$ be the scattering features computed by applying the wavelets at scales $j_\ell,\dots,j_1$ to the perturbed signal $\vcbx$. By applying the non-expansiveness of the nonlinearity $\rho$ and the fact that the norm is submultiplicative, we have that 
\begin{align}
    \| \vcx_{(j_\ell,\dots,j_1)} - \vcbx_{(j_\ell,\dots,j_1)} \| = \\
    \| \rho(\mtH_{j_\ell}(\mtT) \vcx_{(j_{\ell-1},\dots,j_1)}) -\rho(\mtH_{j_\ell}(\mtT) \vcbx_{(j_{\ell-1},\dots,j_1)}) \| \leq \\
    \| \rho(\cdot) \| \| \mtH_{j_\ell}(\mtT) \| \| \vcx_{(j_{\ell-1},\dots,j_1)} - \vcbx_{(j_{\ell-1},\dots,j_1)} \| \leq \\
    B\| \vcx_{(j_{\ell-1},\dots,j_1)} - \vcbx_{(j_{\ell-1},\dots,j_1)} \|.
\end{align}
By repeating this step $\ell$ times we get 
\begin{equation}\label{eq:sig_pert_bound}
    \| \vcx_{(j_\ell,\dots,j_1)} - \vcbx_{(j_\ell,\dots,j_1)} \| \leq B^\ell\|\vcdelta\|.
\end{equation}
We can expand the stability bound, ignoring for now the pruning, as
\begin{align}
    \|\mtPhi(\mtT,\vcx) - \mtPhi(\mtT,\vcbx)\| = \\
    \sqrt{\| U\vcx - U\vcbx \|^2 + \| U\vcx_{(1)} - U\vcbx_{(1)}\|^2 + \dots}  \\ 
    \overline{+ 
    \|  U\vcx_{(J,\dots,J)} - U\vcbx_{(J,\dots,J)}\|^2} \leq \\
    \|U\|\sqrt{\sum_{\ell,j_\ell,\dots,j_1} \| \vcx_{(j_\ell,\dots,j_1)} - \vcbx_{(j_\ell,\dots,j_1)} \|^2 }. \label{eq:norm2}
\end{align}
Each of the terms in~\eqref{eq:norm2} is bounded as in~\eqref{eq:sig_pert_bound}. Let $F_\ell $ be the number of selected scattering features at layer $\ell$ after pruning, and $\|U\|\leq B_U$. We get
\begin{equation}
    \|\mtPhi(\mtT,\vcx) - \mtPhi(\mtT,\vcbx)\| \leq B_U\|\vcdelta\|  \sqrt{ \sum_{\ell=0}^{L-1} F_\ell B^{2\ell} }.
\end{equation}
\qed

\section{Further Details on the Experiments}

We provide additional details on the task of age prediction from cortical thickness data, the datasets used and details on the experimental setup.

\subsection{Background on Age Prediction from Cortical Thickness}

Cortical thickness is the distance between the outer surface of the brain and the white matter surface and it measures the thickness of the cerebral cortex (typically between 2 and 4 millimeters for healthy adults).
It is usually estimated from high-resolution T1-weighted MRI scans via image processing tools that segment the brain into the different tissue types (gray matter, white matter and cerebrospinal fluid), reconstruct the cortical surfaces, and compute thickness by measuring the distance between the white matter and pial surfaces at thousands of points across the cortex. These measurements are typically averaged within brain regions, resulting in one feature per region. The number of brain regions depends on the atlas used in the cortical thickness computation, and ranges between 62 and 68 for the datasets in this work (we report more details later). 
Cortical thickness is a useful biomarker for studying brain aging because the cortex tends to thin with age, especially in frontal and temporal regions. This makes it valuable for predicting a person’s age using machine learning models that map patterns of cortical thickness to chronological age. Such models can help track healthy aging, detect early signs of neurodegeneration, or identify individuals with accelerated brain aging due to neurological or psychiatric conditions.
In particular, age predictors from cortical thickness measures trained on a healthy population can be informative of neurodegenerative cases and various illnesses via their ability to detect uncommon aging patterns in the brain. Covariance information across cortical thickness measures plays a relevant role in this direction, considering the complex underlying mechanisms that regulate brain behavior. Covariance-based predictors have been shown useful to analyze and process cortical thickness measurements in several works~\cite{yin2023anatomically,bashyam2020mri,couvy2020ensemble}.
In~\cite{sihag2024explainable}, covariance-based neural networks obtain good age prediction performance from cortical thickness measures and identify neurodegenerative conditions effectively via brain aging. This highlights the impact of the task at hand.

\subsection{Dataset Details}

\noindent \textbf{ADNI.} 
ADNI~\cite{jack2008alzheimer} is a project that collects clinical measurements of patients affected by Alzheimer's disease and healthy control patients. It is divided in multiple phases based on the collection protocols and historical periods. Here, we analyze two collections: 
ADNI1 contains 801 patients and cortical thickness features for 68 brain regions, while ADNI2 contains 1142 patients and cortical thickness features for 68 brain regions.  
Cortical thickness measures for both datasets are downloaded from \url{https://ida.loni.usc.edu/}, which provides the outputs of the FreeSurfer~\cite{fischl2012freesurfer} processing of the brain MRI recordings. For each patient, we consider the first scan available, generally corresponding to the screening visit. 

\noindent \textbf{PPMI.} PPMI~\cite{marek2011parkinson} collects clinical data of patients suffering from Parkinson's disease and healthy control patients. 
We download the cortical thickness features from \url{https://ida.loni.usc.edu/}, which provides measures for 1704 patients and over 68 brain regions.   
Data used in the preparation of this article was obtained on 2025-07-01 from the Parkinson’s
Progression Markers Initiative (PPMI) database (\url{www.ppmi-info.org/access-dataspecimens/download-data}), RRID:SCR\_006431. For up-to-date information on the study, visit
\url{www.ppmi-info.org}.
PPMI – a public-private partnership – is funded by the Michael J. Fox Foundation for Parkinson’s Research
and funding partners, including 4D Pharma, Abbvie, AcureX, Allergan, Amathus Therapeutics, Aligning
Science Across Parkinson's, AskBio, Avid Radiopharmaceuticals, BIAL, BioArctic, Biogen, Biohaven,
BioLegend, BlueRock Therapeutics, Bristol-Myers Squibb, Calico Labs, Capsida Biotherapeutics, Celgene,
Cerevel Therapeutics, Coave Therapeutics, DaCapo Brainscience, Denali, Edmond J. Safra Foundation, Eli
Lilly, Gain Therapeutics, GE HealthCare, Genentech, GSK, Golub Capital, Handl Therapeutics, Insitro, Jazz
Pharmaceuticals, Johnson \& Johnson Innovative Medicine, Lundbeck, Merck, Meso Scale Discovery, Mission
Therapeutics, Neurocrine Biosciences, Neuron23, Neuropore, Pfizer, Piramal, Prevail Therapeutics, Roche,
Sanofi, Servier, Sun Pharma Advanced Research Company, Takeda, Teva, UCB, Vanqua Bio, Verily, Voyager.

\noindent \textbf{Abide.} 
Abide~\cite{craddock2013neuro} contains clinical information for patients suffering from autism and healthy control patients. We download the preprocessed cortical thickness measures following the instructions at \url{http://preprocessed-connectomes-project.org/abide/} and we keep the thickness features of the Mindboggle protocol~\cite{klein2017mindboggling}. The final dataset consists of 1035 patients with cortical thickness features for 62 brain regions.  

\subsection{Implementation Details}
Experiments are run on AMD EPYC 7452 32-Core Processor with 10 GB of RAM and on a 13th Gen Intel Core i7-1365U with 16 GB of RAM.
Our code is implemented in Python using the NumPy~\cite{harris2020array} and Scikit-learn~\cite{scikit-learn} libraries. 
All experiments are repeated 10 times for different seeds and we report average values and standard deviations. 
We normalize all data to zero mean and unit variance as a preprocessing step. 
We use Mean Average Error (MAE) to assess regression performance due to its ease of interpretation and popularity in related works~\cite{sihag2022covariance}, and Mean Squared Error (MSE) to evaluate the stability of the representations.

\begin{table}[t]
\centering
\begin{tabular}{c|l|cccc}
\toprule
& & \textbf{ADNI1} & \textbf{ADNI2} & \textbf{PPMI} & \textbf{ABIDE} \\
\midrule
\multirow{4}{*}{\textbf{Diff CST}} 
& $J$ & 4 & 6 & 7 & 7 \\
& $L$  & 4 & 2 & 4 & 2 \\
& $\mtT$ & $\mtC_N$ & $\mtC_N$ & $\mtC_I$ & $\mtC_I$ \\
& $\alpha_R$ & 200 & 100 & 100 & 10 \\
\midrule
\multirow{4}{*}{\textbf{Hann CST}} 
& $J$ & 6 & 7 & 7 & 4 \\
& $L$ & 2 & 2 & 4 & 2 \\
& $\mtT$ & $\mtC_I$  & $\mtC_I$ & $\mtC_I$ & $\mtC_I$ \\
& $\alpha_R$ & 200 & 200 & 200 & 100 \\
\midrule
\multirow{4}{*}{\textbf{Monic CST}} 
& $J$ & 7 & 4 & 7 & 7 \\
& $L$ & 2 & 4 & 4 & 3 \\
& $\mtT$ & $\mtC_I$ & $\mtC_I$ & $\mtC_I$ & $\mtC_I$ \\
& $\alpha_R$ & 200 & 100 & 100 & 100 \\
\midrule
\multirow{3}{*}{\textbf{VNN}} 
& $K$ & 2 & 2 & 2 & 2 \\
& $L$ & 1 & 1 & 1 & 1 \\
& $F$ & 32 & 32 & 32 & 32 \\
\midrule
\multirow{3}{*}{\textbf{MLP}} 
& $\alpha_R$ & 200 & 100 & 100 & 100 \\
& $L$ & 1 & 1 & 1 & 2 \\
& $F$ & 32 & 32 & 32 & 32 \\
\midrule
\multirow{2}{*}{\textbf{PCA}} 
& $k$ & 20 & 10 & 10 & 50 \\
& $\alpha_R$ & 200 & 200 & 100 & 100 \\
\midrule
\multirow{1}{*}{\textbf{Raw}} 
& $\alpha_R$ & 200 & 200 & 100 & 100 \\
\bottomrule
\end{tabular}
\caption{Selected parameters on real datasets.}
\label{tab:params}
\end{table}

\begin{table}[t]
\centering
\begin{tabular}{c|l|ccc}
\toprule
& & \textbf{Tail 0.1} & \textbf{Tail 0.5} & \textbf{Tail 0.9}\\
\midrule
\multirow{4}{*}{\textbf{Diff CST}} 
& $J$ & 7 & 4 & 7 \\
& $L$  & 2 & 2 & 2 \\
& $\mtT$ & $\mtC_N$ & $\mtC_I$ & $\mtC_N$ \\
& $\alpha_R$ & 1 & 1 & 1 \\
\midrule
\multirow{4}{*}{\textbf{Hann CST}} 
& $J$ & 5 & 7 & 4 \\
& $L$ & 2 & 2 & 2 \\
& $\mtT$ & $\mtC_I$  & $\mtC_I$ & $\mtC_N$ \\
& $\alpha_R$ & 1 & 1 & 1 \\
\midrule
\multirow{4}{*}{\textbf{Monic CST}} 
& $J$ & 4 & 4 & 4  \\
& $L$ & 3 & 2 & 2 \\
& $\mtT$ & $\mtC_N$ & $\mtC_N$ & $\mtC_N$ \\
& $\alpha_R$ & 1 & 1 & 1 \\
\midrule
\multirow{2}{*}{\textbf{PCA}} 
& $k$ & 20 & 20 & 20 \\
& $\alpha_R$ & 1 & 1 & 1 \\
\midrule
\multirow{1}{*}{\textbf{Raw}} 
& $\alpha_R$ & 1 & 1 & 1 \\
\bottomrule
\end{tabular}
\caption{Selected parameters on synthetic datasets.}
\label{tab:params_synth}
\end{table}

\begin{figure}[t]
    \centering
    \includegraphics[width=.7\linewidth]{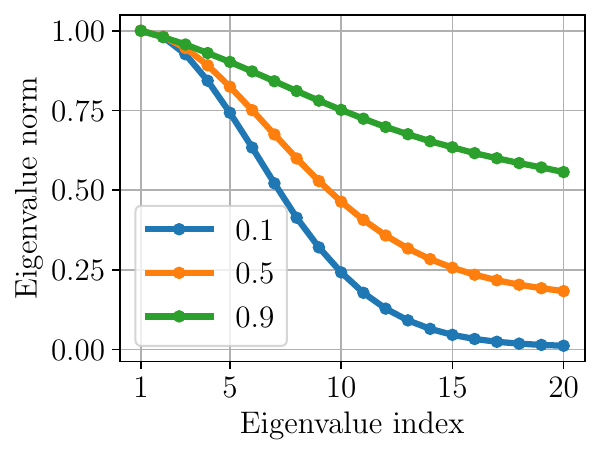}
    \caption{Eigenvalue distribution for different tail strengths.} 
    \label{fig:eig_distr}        
\end{figure}

\subsection{Hyperparameters}

We provide additional details on model implementations and hyperparameters. The best hyperparameters selected via grid search on a validation set for the various models on real datasets are reported in Table~\ref{tab:params} and Table~\ref{tab:params_synth} for synthetic datasets.

\noindent \textbf{Baselines.}
We report additional details on the implementation of the baseline models and on the hyperparameter grids that we use for selection on a validation set.
\begin{itemize}
    \item For ridge regression, we use the \verb|Ridge| class in the scikit-learn library and we tune its regularization parameter $\alpha_R$ among the following values: $\{1, 10, 100, 200\}$.
    \item For VNN, we optimize among the following parameters: number of layers $L \in \{ 1,2 \}$, embedding size $F \in \{ 16,32 \}$, filters order $K \in \{ 1,2,3 \}$. We use a learning rate of $0.01$, LeakyReLU as non-linear activation, full batch and we train for 500 epochs selecting the best-performing model on the validation set.
    \item For PCA, we select the number of components $k$ among the following values: $k\in \{ 10, 20, 50 \}$.
    \item For MLP, we use the \verb|MLPRegressor| class in the scikit-learn library, we use a feature size $F=32$ and we tune the other hyperparameters in the following ranges: number of layers $L \in \{ 1,2 \}$, L2 regularization term $\alpha_R \in \{1, 10, 100, 200\}$.  
\end{itemize}

\noindent \textbf{CSTs.}
For the CSTs, we search among the following values: $J \in \{ 4, 5, 6, 7 \}$, $L \in \{ 2,3,4 \}$, $\mtT \in \{\mtC_N, \mtC_I\}$.
We use absolute value as nonlinearity $\rho$.
For the ridge regression readout, we optimize the regularization coefficient $\alpha_R$ among the values $\{ 1, 10, 100, 200 \}$.
For CSTs, we set $U=\mtI$ in all experiments except those where we analyze aggregation for dimensionality reduction where we set $U=\textbf{1}_N/N$ (average). 
For monic cubic wavelets, we use the parameters $\alpha=2$, $\beta=2$ and $K=20$ as in~\cite{hammond2011wavelets}. For Hann wavelets, we use $R=3$ and we perform warping as in~\cite{gama2019stabilityscatter}.

\begin{figure}[t]
    \centering
    \begin{subfigure}{.49\textwidth}
    \includegraphics[width=1\linewidth]{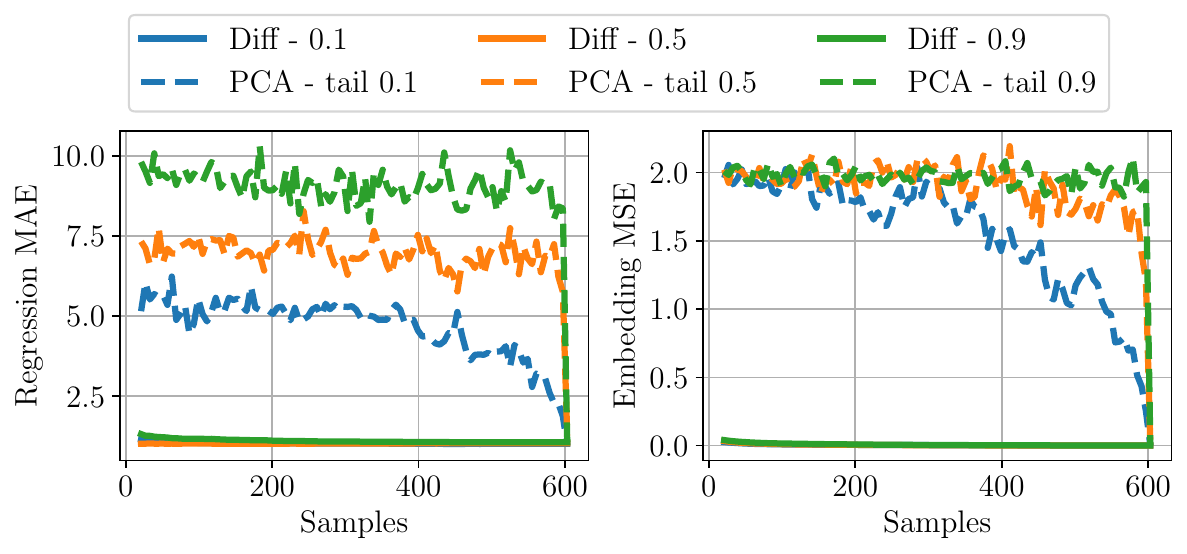} 
    \end{subfigure}
    \hfill
    \begin{subfigure}{.49\textwidth}
    \includegraphics[width=1\linewidth]{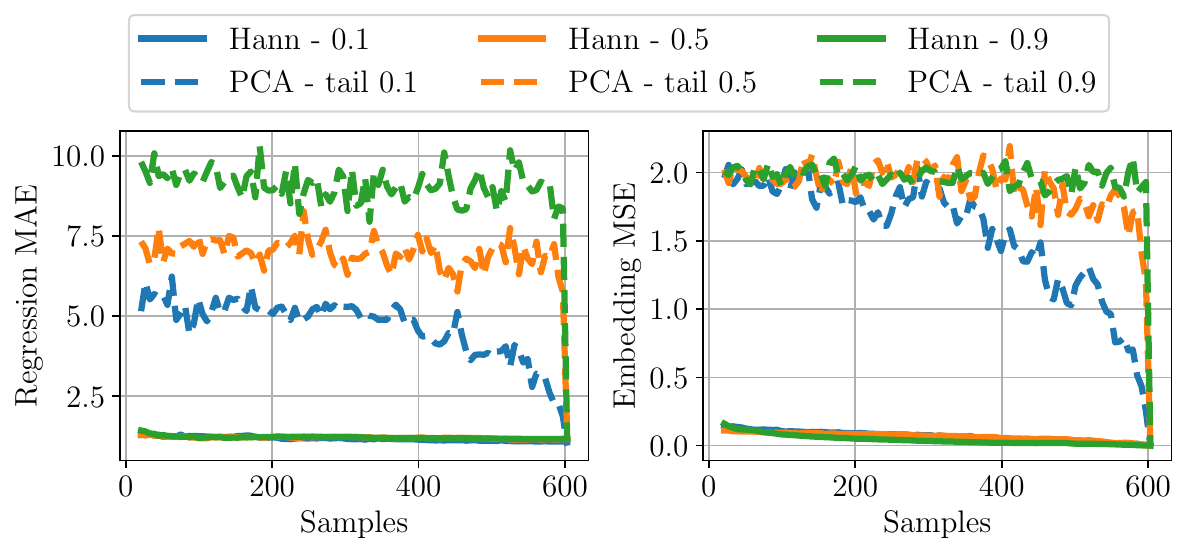} 
    \end{subfigure}
    \hfill
    \begin{subfigure}{.49\textwidth}
    \includegraphics[width=1\linewidth]{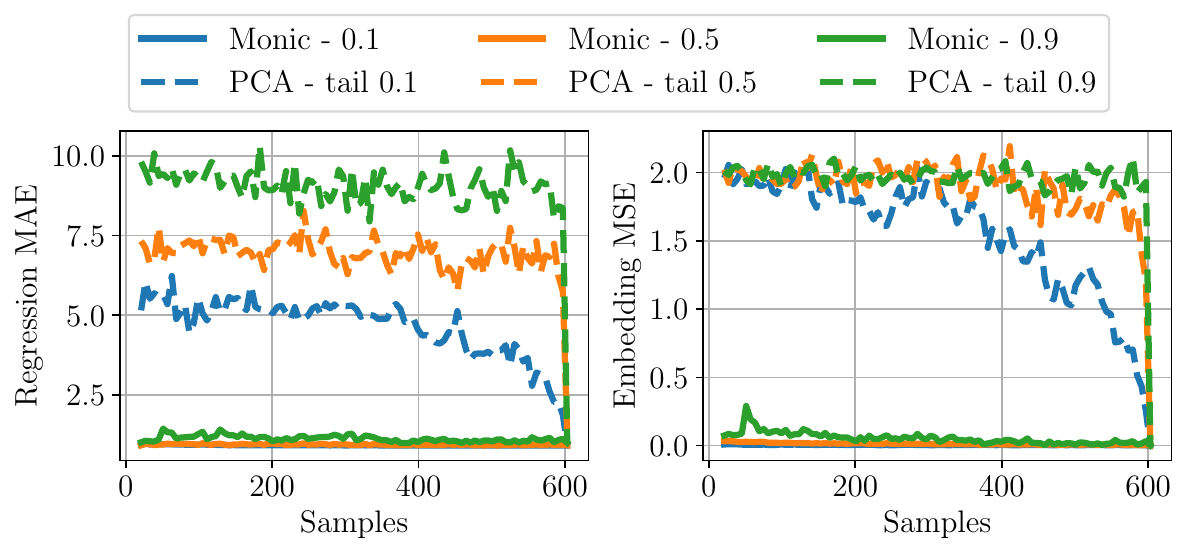} 
    \end{subfigure}
    \caption{Regression performance and embedding perturbation of CSTs and PCA under covariance estimation errors for datasets with different tail strengths.} 
    \label{fig:synth_tail}        
\end{figure}

\begin{table*}[t]
\centering
\begin{tabular}{c|l|cccc}
\toprule
& & \textbf{ADNI1} & \textbf{ADNI2} & \textbf{PPMI} & \textbf{ABIDE} \\
\midrule
\multirow{5}{*}{\textbf{Baselines}} 
& \textbf{Ridge} & 5.12~$\pm$~0.14 & \underline{\textit{5.44}}~$\pm$~0.24 & 6.57~$\pm$~0.20 & 4.28~$\pm$~0.22 \\
& \textbf{PCA+Ridge} & 5.14~$\pm$~0.15 & 5.47~$\pm$~0.26 & 6.58~$\pm$~0.19 & 4.30~$\pm$~0.23 \\
& \textbf{KPCA+Ridge} & 5.14~$\pm$~0.13 & 5.48~$\pm$~0.25 & 6.75~$\pm$~0.20 & 4.30~$\pm$~0.24 \\
& \textbf{MLP} & 6.61~$\pm$~0.73 & 6.61~$\pm$~0.45 & 7.29~$\pm$~0.37 & \textbf{3.88~$\pm$~0.10} \\
& \textbf{VNN} & 5.18~$\pm$~0.16 & 5.69~$\pm$~0.23 & 6.96~$\pm$~0.19 & 6.08~$\pm$~0.30 \\
\midrule
\multirow{3}{*}{\textbf{Ours}} 
& \textbf{Diff CST} & \textbf{5.06~$\pm$~0.12} & 5.46~$\pm$~0.24 & \textbf{6.49}~$\pm$~0.21 & 4.34~$\pm$~0.20 \\
& \textbf{Hann CST} & \underline{\textit{5.09}}~$\pm$~0.12 & \textbf{5.43}~$\pm$~0.25 & 6.58~$\pm$~0.21 & \underline{\textit{4.03}}~$\pm$~0.26 \\
& \textbf{Monic CST} & \underline{\textit{5.09}}~$\pm$~0.13 & 5.48~$\pm$~0.24 & \underline{\textit{6.57}}~$\pm$~0.23 & 4.14~$\pm$~0.24 \\
\bottomrule
\end{tabular}
\caption{Regression MAE ($\downarrow$) for CSTs and baselines. \textbf{First best} and \underline{\textit{second best}} results are highlighted.}
\label{tab:extra_comparisons}
\end{table*}

\subsection{Setup for Stability Experiments}

We provide additional details on the setup of the stability experiments on real and synthetic datasets.
We keep $50\%$ of the dataset as unlabeled, i.e., we do not consider the age information for these patients (or other regression target) but we only use their cortical thickness measures to compute the unsupervised representations via PCA or CSTs. The remaining portion of the data is used as labeled dataset for the downstream regression task and is split into sets of size $10\%$ for training, $20\%$ for validation and $20\%$ for test. 
During the training phase, we compute the CSTs on the union of training and unlabeled set $\mathcal{U}$ and we produce representations for the training, which are fed to a ridge regressor to predict the patient's age. Once the ridge regressor is trained, we fix its coefficients. 
Then, we simulate finite-sample covariance estimation errors by re-computing the CST on a covariance estimated from a subset of $\mathcal{U}$. With these perturbed CSTs, we produce perturbed representations of the test samples, and we feed them to the previously trained ridge regressor for the downstream task.
We analyze the stability in terms of MAE on the regression task with perturbed representations and MSE between perturbed and clean representations.

\section{Additional Experiments and Ablations}

We report additional experiments to further investigate the objectives \textbf{(O1)}, \textbf{(O2)}, \textbf{(O3)} in the main body.

\begin{figure*}[t]
    \centering
    \includegraphics[width=1\linewidth]{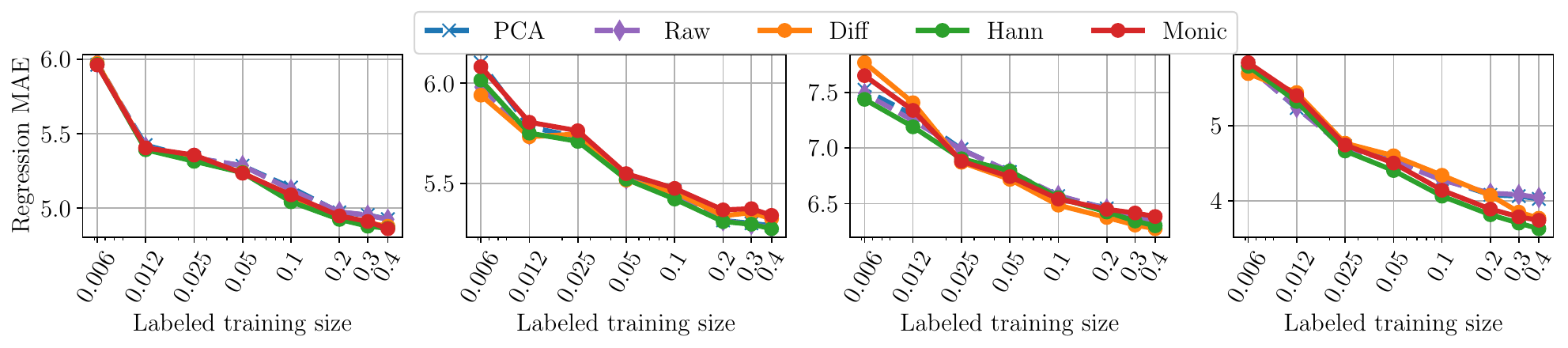}
    \caption{Impact of the size of the labeled training set on performance. Datasets are, from left to right, ADNI1, ADNI2, PPMI and Abide. }
    \label{fig:labeled_exp}        
\end{figure*}

\begin{figure*}[t]
    \centering
    \includegraphics[width=1\linewidth]{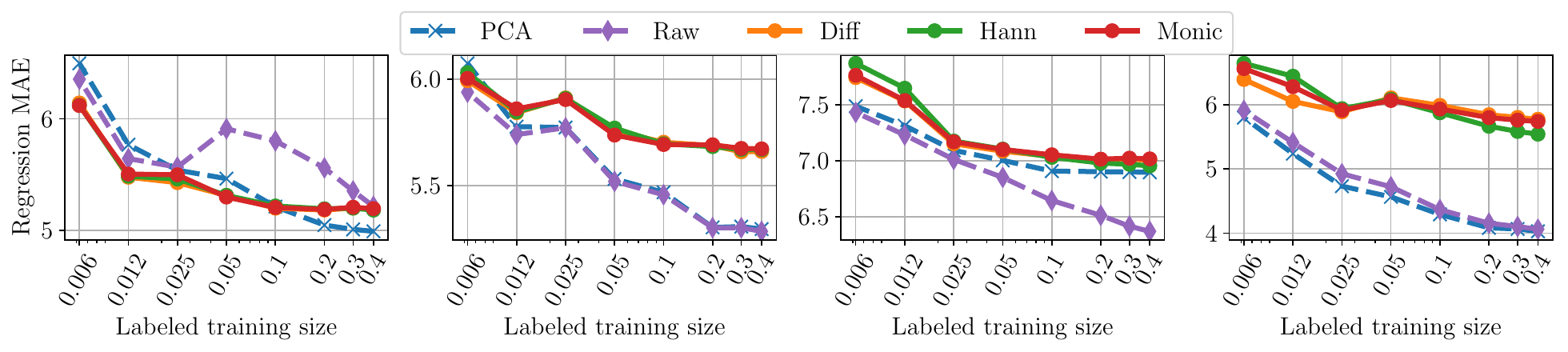}
    \caption{Impact of the size of the labeled training set on performance when using mean aggregation. Datasets are, from left to right, ADNI1, ADNI2, PPMI and Abide. }
    \label{fig:aggregation}        
\end{figure*}

\subsection{Stability on Synthetic Datasets}

We perform experiments on a synthetic regression task to better evaluate the stability of CST for covariances with different eigenvalue distributions \textbf{(O1)}.

\noindent \textbf{Datasets.}
We generate 3 synthetic regression datasets of $T=1000$ observations of size $N=20$ with different covariance eigenvalue distributions via the \verb|make_regression| function in the scikit-learn library~\cite{scikit-learn}, which controls the eigenvalue distribution tail strength. In particular, covariances with larger tail have closer eigenvalues, as illustrated in Figure~\ref{fig:eig_distr}, and represent more difficult settings for PCA (c.f. \eqref{eq:pca_bound}).

\noindent \textbf{Discussion.} 
Figure~\ref{fig:synth_tail} shows that CSTs produce stable representations under covariance estimation errors, which are not strongly affected by the eigenvalue tail. PCA, instead, is highly sensitive to covariance perturbations, leading to unstable results, and is strongly affected by the eigenvalue closeness, which is in line with the theoretical bound in~\eqref{eq:pca_bound}. 
This results in CSTs having a consistent performance on the regression task under covariance estimation errors and producing embeddings close to the ones computed on the clean covariance, whereas PCA's embeddings significantly degrade with noisy covariances and lead to an increase in regression error.
This highlights the effectiveness of CSTs in low-sample regimes and heavy-tailed distributions which are problematic for traditional methods like PCA.

\subsection{Age Prediction from Cortical Thickness}

To further investigate the expressiveness of CSTs representations on the 4 datasets of cortical thickness measures, we report comparisons with additional baselines for the age prediction task and details on the standard deviations in Table~\ref{tab:extra_comparisons}. The representations produced by CSTs fed to a ridge regressor produce the best results on 3 of the 4 datasets, whereas they rank second best on Abide after the MLP. However, the MLP performs poorly on the other 3 datasets, likely due to its higher training complexity for this low-training-data setting, making CSTs's performance the most consistent across datasets.
This corroborates the expressivity of CSTs' representations and their capability to capture relevant patterns from unlabeled data.

\subsection{Impact of Labeled Data and Aggregation}

We analyze the impact of the size of unlabeled dataset for unsupervised learning when using two different aggregation functions $U$: identity $\mtI$ and mean aggregation $\mathbf{1}_N/N$. In the first case, the results can be compared with those in Table~\ref{tab:extra_comparisons} and Figure~\ref{fig:stab_real}, whereas in the second case we assess the impact of aggregation for dimensionality reduction as the number of labels for supervised training becomes smaller. 
For the experiments with mean aggregation, we set the number of components of PCA to match the number of features of CST to ensure a comparison among dimensionality reduction techniques with an equal number of coefficients.

\noindent \textbf{Experimental setup.} For $U=\mtI$, we pick the best hyperparameters computed in Table~\ref{tab:extra_comparisons}, while for $U=\mathbf{1}_N/N$, we perform a different hyperparameter search using a labeled training set of $0.6\%$ to evaluate the impact of dimensionality reduction under limited labels. 
Then, using the best hyperparameters found, we re-compute CSTs, PCA and we re-train ridge regressors on a labeled training set of size varying from $0.6\%$ to $40\%$, while the validation and test sets remain fixed to $20\%$. The remaining portion of the data is used as unlabeled (i.e., only for PCA and CSTs representations). 

\noindent \textbf{Discussion.} Figure~\ref{fig:labeled_exp} shows that, for $U=\mtI$, CSTs' representations lead to similar or better performance than PCA and raw features for most labeled training sizes, indicating their effectiveness under different training scenarios.
For very small training sizes, however (e.g., $0.6\%$) the increased representation size of CSTs causes a performance drop. To counteract this issue, $U$ can be set to perform dimensionality reduction, such as the mean aggregation operation $\mathbf{1}_N/N$. As shown in Figure~\ref{fig:aggregation}, in this case the CSTs match or outperform PCA on ADNI1 and ADNI2 for very small labeled training size due to their reduced representation size. On ADNI1, this aggregation leads to better performance than PCA and raw features even when increasing the labeled training size (where results for raw and PCA are worse than in Table~\ref{tab:extra_comparisons} since the parameters are now optimized in the low-label setting). On the other datasets, instead, increasing the labeled training size leads to PCA and raw features performing better than CSTs due to their increased expressiveness and information richness that the aggregation operation of CSTs loses and that the regressor can learn to exploit.
This identifies a tradeoff where the choice of $U$ can depend on the amount of labeled data available for training on a downstream task.

\subsection{Impact of Pruning}

We elaborate further on the experiment to assess the impact of pruning that we discuss in the main body.

\noindent \textbf{Experimental setup.} Given the best parameter configuration from Table~\ref{tab:params}, obtained without pruning, we assess the impact of our pruning strategy by increasing $\tau$ and checking the regression MAE of CSTs, their computation time (i.e., the time required to compute their output representations) and the dimensionality of their output representations.

\noindent \textbf{Discussion.} Figure~\ref{fig:pruning} shows that, in general, increasing $\tau$ leads to no or minor differences in MAE, while the computation time decreases significantly, often becoming comparable or lower than that of PCA. When $\tau$ gets closer to 1, all scattering features are removed except the original node features, which removes the advantage of CSTs. However, for all datasets there is an intermediate value of $\tau$ such that the MAE remains the same (improving w.r.t. PCA and raw features), but computational time and number of features are reduced, corroborating the effectiveness of the pruning mechanism to reduce complexity while maintaining the expressivity advantages of CSTs.

\begin{figure*}[t]
    \centering
    \begin{subfigure}{\textwidth}
    \includegraphics[width=1\linewidth]{figures/pruning_adni1.pdf} 
    \end{subfigure}
    \hfill
    \begin{subfigure}{\textwidth}
    \includegraphics[width=1\linewidth]{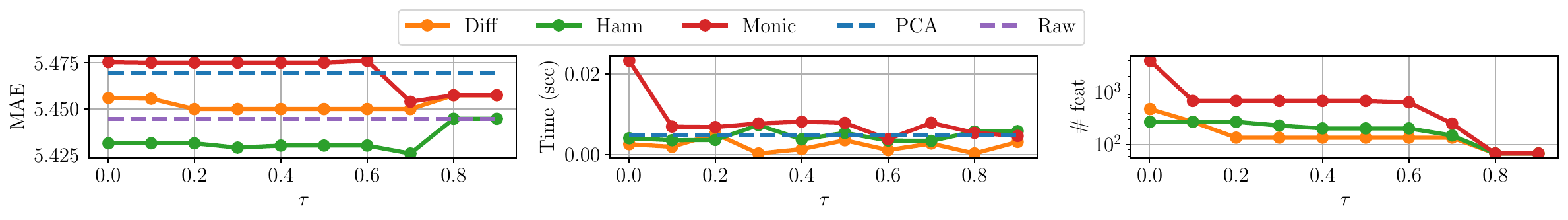} 
    \end{subfigure}
    \hfill
    \begin{subfigure}{\textwidth}
    \includegraphics[width=1\linewidth]{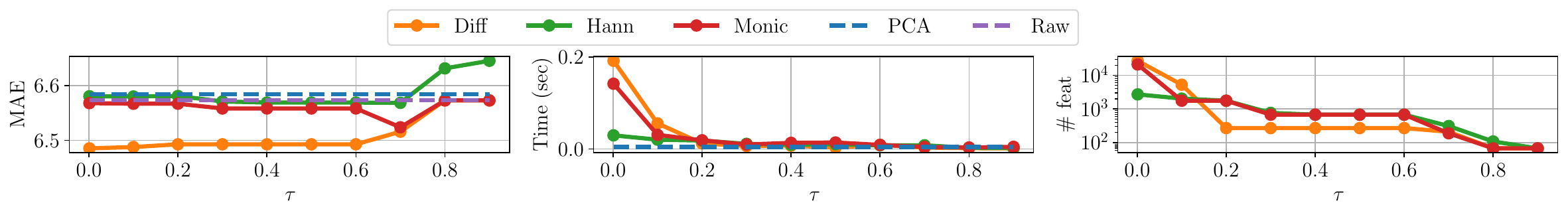} 
    \end{subfigure}
    \hfill
    \begin{subfigure}{\textwidth}
    \includegraphics[width=1\linewidth]{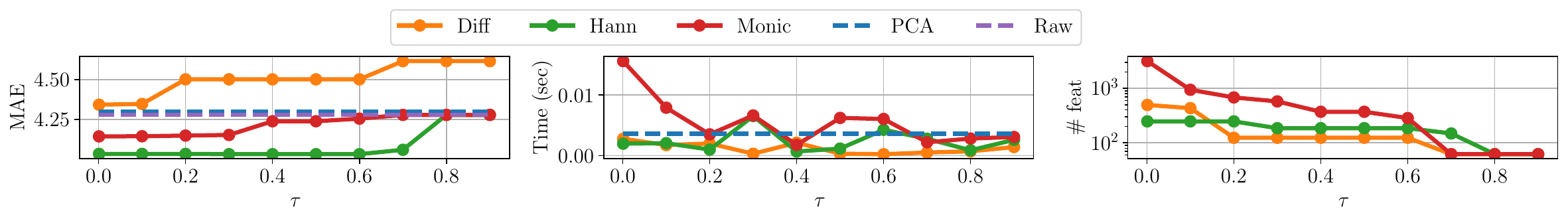} 
    \end{subfigure}
    \caption{Impact of pruning. From top to bottom, the datasets (one per row) are ADNI1, ADNI2, PPMI and Abide. The columns, from left to right, show the regression MAE on the downstream age prediction task, the time to compute the output representations and the number of features retained by the pruning.} 
    \label{fig:pruning}        
\end{figure*}

\end{document}